\documentclass{article}

\usepackage{microtype}
\usepackage{graphicx}
\usepackage{subcaption}
\usepackage{booktabs} 
\usepackage{tabularx}

\usepackage{hyperref}

\usepackage[preprint]{icml2026}

\usepackage{amsmath}
\usepackage{amssymb}
\usepackage{mathtools}
\usepackage{amsthm}
\usepackage{adjustbox}
\usepackage{enumitem}

\usepackage[capitalize,noabbrev]{cleveref}

\theoremstyle{plain}
\newtheorem{theorem}{Theorem}

\newtheorem{lemma}[theorem]{Lemma}
\newtheorem{corollary}[theorem]{Corollary}
\theoremstyle{definition}
\newtheorem{definition}[theorem]{Definition}

\theoremstyle{remark}

\newcommand{\R}{\mathbb{R}}
\newcommand{\inner}[2]{\left\langle #1,\,#2 \right\rangle}

\newcommand{\softmax}{\mathrm{softmax}}
\newcommand{\Id}{I}

\usepackage{xcolor}
\definecolor{softred}{RGB}{250,100,100}
\definecolor{softgreen}{RGB}{56,118,29}
\definecolor{softblue}{RGB}{100,150,200}
\definecolor{softgray}{RGB}{150,150,150}
\newcommand{\textred}[1]{{\color{softred}#1}}
\newcommand{\textblue}[1]{{\color{softblue}#1}}
\newcommand{\textgray}[1]{{\color{softgray}#1}}
\newcommand{\textgreen}[1]{{\color{softgreen}#1}}

\usepackage[textsize=tiny]{todonotes}

\usepackage{array}
\usepackage{multicol}
\usepackage{multirow}

\usepackage{pgfplots}
\pgfplotsset{compat=1.18}

\icmltitlerunning{When Does Context Help? Error Dynamics of Contextual Information in Large Language Models}

\begin{document}
    \twocolumn[
      \icmltitle{\textit{When Does Context Help?} Error Dynamics of Contextual Information\\in Large Language Models}

    
    
      \icmlsetsymbol{equal}{*}
    
      \begin{icmlauthorlist}
        \icmlauthor{Dingzirui Wang}{hit}
        \icmlauthor{Xuanliang Zhang}{hit}
        \icmlauthor{Keyan Xu}{hit}
        \icmlauthor{Qingfu Zhu}{hit}
        \icmlauthor{Wanxiang Che}{hit}
        \icmlauthor{Yang Deng}{smu}
      \end{icmlauthorlist}

      \icmlaffiliation{hit}{Harbin Institute of Technology}
      \icmlaffiliation{smu}{Singapore Management University}
    
      \icmlcorrespondingauthor{Dingzirui Wang}{dzrwang@ir.hit.edu.cn}
      \icmlcorrespondingauthor{Wanxiang Che}{car@ir.hit.edu.cn}
    
      \icmlkeywords{Machine Learning, ICML}
    
      \vskip 0.3in
    ]
    
    
    
    \printAffiliationsAndNotice{}  

    \begin{abstract}
        Contextual information at inference time, such as demonstrations, retrieved knowledge, or interaction history, can substantially improve large language models (LLMs) without parameter updates, yet its theoretical role remains poorly understood beyond specific settings such as in-context learning (ICL). We present a unified theoretical framework for analyzing the effect of arbitrary contextual information in Transformer-based LLMs. Our analysis characterizes contextual influence through output error dynamics. In a single-layer Transformer, we prove that the context-conditioned error vector decomposes additively into the baseline error vector and a contextual correction vector. This yields necessary geometric conditions for error reduction: the contextual correction must align with the negative baseline error and satisfy a norm constraint. We further show that the contextual correction norm admits an explicit upper bound determined by context–query relevance and complementarity. These results extend to multi-context and multi-layer Transformers. Experiments across ICL, retrieval-augmented generation, and memory evolution validate our theory and motivate a principled context selection strategy that improves performance by $0.6\%$.
    \end{abstract}

    \section{Introduction}
        Providing query-related \textbf{contextual information} in the input is an effective method to improve the inference performance of large language models (LLMs) without fine-tuning. 
This paradigm, broadly referred to as context engineering~\cite{mei2025surveycontextengineeringlarge}, now underpins a wide spectrum of methods, including In-Context Learning (ICL) via demonstrations \cite{dong-etal-2024-survey}, Retrieval-Augmented Generation (RAG) through external knowledge injection \cite{li-etal-2025-survey}, and Memory Evolution (ME) based on historical interactions \cite{luo-etal-2026-from}. Despite their empirical success, these methods are often developed in isolation, guided by heuristic intuitions rather than a unified understanding of how contextual information influences inference.

Recent theoretical efforts have begun to probe the role of context, primarily focusing on the internal mechanisms of ICL~\cite{huang-2024-convergence,zhang2024trained}.
These studies typically analyze idealized settings where contextual demonstrations and test queries are drawn from the same distribution or share a common functional relationship.
Parallel work in RAG ~\cite{xu2025a} examines how statistical properties of retrieved knowledge, such as relevance or redundancy, affect downstream performance.
While valuable, these analyzes exhibit two fundamental limitations. 
\textit{(i)} Existing theories are \textbf{tightly coupled to specific context types}, most notably ICL demonstrations, limiting their applicability to the broader ecosystem of context engineering methods. \textit{(ii)} Most analyzes \textbf{rely on distributional alignment assumptions between context and query} that are routinely violated in practice, especially for memory-based or dynamically generated context. 
As a result, we currently lack a general theoretical principle that explains how arbitrary contextual information affects LLM inference.

\begin{table*}[t]
    \centering
    \small
    \caption{
        The main findings and corresponding evidence of this paper.
    }
    \begin{tabular}{>{\raggedright\arraybackslash}m{0.7\textwidth}
                >{\raggedright\arraybackslash}m{0.1\textwidth}
                >{\raggedright\arraybackslash}m{0.1\textwidth}}
    \toprule
    \textbf{Finding} & \textbf{Evidence} & \textbf{Experiment} \\
    \midrule
    \textit{Additive error decomposition}: context-conditioned error vector equals baseline error vector plus contextual correction vector in Transformers
    & Theorem~\ref{thm:error-decomp}
    & \S\ref{subsec:vector_relation_experiment} \\
    \midrule
    \textit{Geometric condition for error reduction}: sufficient angular alignment between contextual correction and negative baseline error, relative to the ratio between their norms
    & Theorem~\ref{thm:decrease}
    & \S\ref{subsec:vector_relation_experiment} \\
    \midrule
    \textit{Upper bound on contextual correction norm}: governed by the relevance and complementarity between the contextual information and the user query
    & Theorem~\ref{thm:decrease}
    & \S\ref{subsec:vector_upper_bound_experiment} \\
    \bottomrule
\end{tabular}

    \label{tab:findings}
\end{table*}

To this end, we develop a unified theoretical framework for understanding the effect of arbitrary contextual information on Transformer-based LLMs \cite{vaswani-etal-2017-attention} during inference.
Rather than categorizing context by its source or semantics, we analyze its effect through the lens of output error dynamics.
Our main conclusions are summarized in Table~\ref{tab:findings}.
We first show that, in the single-context and single-layer Transformer setting, \textbf{the context-conditioned error vector decomposes additively into the baseline error vector and a contextual correction vector}.
Based on this, we derive the geometric conditions of the context to reduce error and prove that the norm of the contextual correction vector is inherently upper-bounded by the relevance and complementarity between the context and the user query.
We then extend these conclusions to multi-context and multi-layer models, where we prove that analogous conclusions about context continue to hold in these settings.

We empirically validate our theory across three major context engineering paradigms (ICL, RAG, and ME) using four mainstream LLMs and seven datasets spanning three tasks. 
Across all settings, experimental results closely match our theoretical predictions. 
Analysis of failure cases reveals a unifying explanation: ineffective contexts primarily arise from poor angular alignment with the baseline error or insufficient correction norms. 
Motivated by these insights, we introduce a simple context selection strategy based on angular alignment and norm upper bounds, achieving an average relative improvement of 0.6\% over strong baselines, shedding light on future research. 
Together, our results provide a principled foundation for understanding and designing context engineering methods for LLMs.

Our contributions are as follows:
\begin{itemize}[leftmargin=*,nosep]
    \item We present a general theoretical framework that characterizes how arbitrary contextual information affects inference in Transformer models through output error dynamics.
    \item We derive the necessary geometric conditions of contextual information that reduce output error (i.e., the angle and norm of contextual correction vectors) and establish an upper bound on contextual correction determined by context–query relevance and complementarity.
    \item Through extensive experiments across ICL, RAG, and ME, four LLMs, and seven datasets, we empirically validate our theory and identify \textbf{angular misalignment and insufficient correction norms} as the dominant causes of ineffective contextual information, translating theory into practical context selection strategies.
\end{itemize}

    \section{Preliminary}
        In this section, we introduce some necessary mathematical notation.
Following prior work \cite{zhang2024trained,lu2025asymptotic}, we regard the contextual information, user input, and output as single semantic vectors after encoding them separately.
In Appendix~\ref{app:multi_token}, we discuss that our conclusions continue to hold when the inputs and outputs consist of multiple tokens.
Specifically, let $d\in\mathbb{N}$ denote the embedding dimension.
We consider the input matrix $E = [\,t,\,x\,]\in\R^{d\times 2}$, where $t\in\R^d$ represents the contextual information and $x\in\R^d$ represents the user input.
The prediction function $f: \R^{L \times d} \rightarrow \R^{L \times d}$ maps an input of arbitrary length $L$ to an output of the same length.
Given a target $y\in\R^d$, we view the output at the position corresponding to $x$, denoted $f_x$, as the model readout and define the \textbf{context-conditioned error vector} $e(t,x):=f_x(t,x)-y\in\R^d$. 
As a comparison, we define the \textbf{baseline error vector} $e(x):= f_x(x)-y$.

Following prior work \cite{zhang2024trained,huang-2024-convergence}, we primarily focus on the Transformer architecture.
Specifically, the model we consider consists of one or more Transformer blocks, where each block contains an attention layer and an MLP layer with a residual connection.

\paragraph{Attention layer}
    Let $W_Q,W_K,W_V,W_O\in\R^{d\times d}$ be the query matrix, key matrix, value matrix, and output projection matrix, respectively.
    For any vector $z\in\R^d$, define
    \(
    q_z:=W_Q z, k_z:=W_K z, v_z:=W_V z.
    \)
    Define the query, key, and value matrices
    \(
    Q=[\,q_t,\,q_x\,]\in\R^{d\times 2},
    K=[\,k_t,\,k_x\,]\in\R^{d\times 2},
    V=[\,v_t,\,v_x\,]\in\R^{d\times 2}.
    \)
    Let the $(i,j)$-th entry of the score matrix $S\in\R^{2\times 2}$ be
    \[
    S_{ij}:=\frac{\inner{q_{\bullet(i)}}{k_{\bullet(j)}}}{\sqrt{d}}
    \quad(i,j\in\{t,x\}),
    \]
    and apply the $\softmax$ operation row-wise.
    Denote $A:=\softmax(S)\in\R^{2\times 2}$.
    Then the attention output at position $u\in\{t,x\}$ is defined as
    \[
    \mathrm{AttnOut}_u(E):=W_O\!\bigl(\sum_{w\in\{t,x\}} A_{u w}\,v_w\bigr)\in\R^d.
    \]
    The attention layer with a residual connection at position $u$ is
    \[
    \mathrm{Attn}_u(E):=E_{\bullet(u)}+\mathrm{AttnOut}_u(E)\in\R^d.
    \]
    
\paragraph{MLP layer}
    Given weights $W_1,W_2\in\mathbb{R}^{d \times d}$.
    Let the nonlinear activation function $\sigma:\R^d\to\R^d$ be $1$-Lipschitz.
    We define the MLP layer with the residual connection by
    \[
    \mathrm{MLP}(z):=z+W_2\,\sigma(W_1 z),\ z\in\R^d.
    \]

Finally, the output of a single block is defined as
\[
\mathrm{Block}_u(E):=\mathrm{MLP}\bigl(\mathrm{Attn}_u(E)\bigr)\in\R^d.
\]
In Appendix~\ref{app:other_structure_of_transformer}, we discuss the impact of other architectural components of Transformers on our conclusions.

    \section{Error Dynamics of Contextual Information}
        \label{sec:analysis}
        \begin{figure}
    \centering
    \small
    \includegraphics[width=\linewidth]{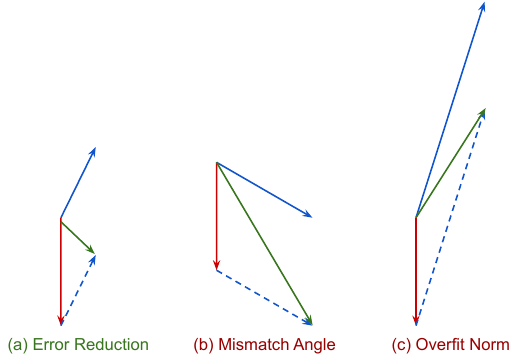}
    \caption{
        The illustration of how the contextual information affects the output error.
        The \textred{red vector} denotes the baseline error vector $e(x)$, the \textblue{blue vector} denotes the contextual correction vector $g(t,x)$, and the \textgreen{green vector} denotes the context-conditioned error vector $e(t,x)$.
        \textit{(a)} the norm and direction of the contextual correction vector are appropriate, so the resulting error norm is smaller than the original error;
        \textit{(b)} the norm of the contextual correction vector is appropriate, but its direction is not aligned with the negative direction of the error, which leads to a larger error;
        \textit{(c)} the angle of the contextual correction vector is appropriate, but its norm is much larger than that of the original error, which also leads to a larger error.
        We present a case study in Appendix~\ref{app:case_study}.
    }
    \label{fig:vector_case}
\end{figure}

In this section, we study how contextual information affects the output error.
We first show that for a single context on a single-layer Transformer, the context-conditioned error vector $e(t,x)$ equals the sum of the baseline error vector $e(x)$ and the contextual correction vector $g(t,x)$.
Then we prove that, for multiple contexts and multi-layer models, the gain has a similar effect on the error.
The proofs related to this section are deferred to Appendix~\ref{app:proof}.

\subsection{Single Layer, Single Information}
    In this part, we discuss how a single piece of contextual information affects the output error in a single-layer Transformer.
    We first provide the definition of \textbf{contextual correction vector}.
    Then, we derive the condition under which contextual information helps reduce the output error, and we provide an upper bound for the norm of the vector.

    We denote the attention weights at position $x$ by $\alpha_{x\leftarrow t}:=A_{x t},\alpha_{x\leftarrow x}:=A_{x x}$.
    Let $\delta_{\rm attn}(t,x) = \mathrm{Attn}_x([t,x])-\mathrm{Attn}_x([x])$ denote the change in the attention output.
    Let the Jacobian matrix of the MLP be:
    \begin{equation}\label{equ:mlp_jacobian_matrix}
        J_{\mathrm{MLP}}(z):=\Id+W_2\,J_\sigma(W_1 z)\,W_1,
    \end{equation}
    where $\|J_\sigma(\cdot)\|\le 1$ holds almost everywhere\footnote{In this paper, $\|\cdot\|$ denotes the Euclidean norm for vectors and the operator norm induced by the Euclidean norm for matrices}.
    We then define the vector of the context information as follows:
    \begin{definition}[Contextual Correction Vector]
        \label{def:context_information_vector}
        \[
        \resizebox{\linewidth}{!}{$
        g(t,x) :=
        \int_{0}^{1}
        J_{\mathrm{MLP}}\!\bigl(\mathrm{Attn}_x([x])+s\,\delta_{\rm attn}(t,x)\bigr)\,
        \delta_{\rm attn}(t,x)\,\mathrm{d}s
        $}
        \]
    \end{definition}
    Definition~\ref{def:context_information_vector} can be viewed as the change in the output caused by the context information after it sequentially passes through the attention and MLP layers, relative to the original input.
    Based on Definition~\ref{def:context_information_vector}, we have:    
    \begin{theorem}
        \label{thm:error-decomp}
        $e(t,x) = e(x) + g(t,x)$
    \end{theorem}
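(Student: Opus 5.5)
The plan is to reduce the statement to the fundamental theorem of calculus along a line segment, applied to the MLP block. In the single-layer model the readout at position $x$ is $f_x(t,x)=\mathrm{MLP}(\mathrm{Attn}_x([t,x]))$ with context, and $f_x(x)=\mathrm{MLP}(\mathrm{Attn}_x([x]))$ without it. The target $y$ is the same in both cases, so subtracting the two error vectors cancels $y$:
\[
e(t,x)-e(x)=\mathrm{MLP}\bigl(\mathrm{Attn}_x([t,x])\bigr)-\mathrm{MLP}\bigl(\mathrm{Attn}_x([x])\bigr).
\]
It therefore suffices to show that the right-hand side equals $g(t,x)$.

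Write $a:=\mathrm{Attn}_x([x])$ and $\delta:=\delta_{\rm attn}(t,x)$, so that $\mathrm{Attn}_x([t,x])=a+\delta$ by definition. Consider the path $\phi(s):=\mathrm{MLP}(a+s\delta)$ for $s\in[0,1]$. We have $\phi(1)-\phi(0)=\mathrm{MLP}(a+\delta)-\mathrm{MLP}(a)$, which is exactly the difference above.

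Next we need $\phi(1)-\phi(0)=\int_0^1\phi'(s)\,\mathrm{d}s$ with $\phi'(s)=J_{\mathrm{MLP}}(a+s\delta)\,\delta$, using the chain rule and Eq.~\eqref{equ:mlp_jacobian_matrix}. Substituting gives precisely the integral in Definition~\ref{def:context_information_vector}, and hence $e(t,x)=e(x)+g(t,x)$.

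The main obstacle is justifying the integral identity when $\sigma$ is only $1$-Lipschitz rather than $C^1$. The plan for this step is as follows.
\begin{itemize}
\item The map $z\mapsto W_2\sigma(W_1z)$ is Lipschitz, so $\phi$ is Lipschitz on $[0,1]$ and hence absolutely continuous. The Lebesgue form of the fundamental theorem of calculus then gives $\phi(1)-\phi(0)=\int_0^1\phi'(s)\,\mathrm{d}s$, with $\phi'$ existing for almost every $s$.
\item The delicate point is identifying $\phi'(s)$ with $J_{\mathrm{MLP}}(a+s\delta)\delta$ almost everywhere. Rademacher's theorem only ensures differentiability of $\sigma$ almost everywhere in $\R^d$, and the segment $\{W_1(a+s\delta)\}$ is a null set there.
\item I would resolve this by taking $\sigma$ to act coordinatewise, as is standard. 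Each coordinate $s\mapsto\sigma_i(w_i^\top(a+s\delta))$ is then a one-dimensional Lipschitz composition, so it is differentiable a.e. in $s$, with derivative $\sigma_i'(\cdot)\,w_i^\top\delta$. This holds under the convention that $\sigma_i'$ takes any value in $[-1,1]$ where it is undefined. When $w_i^\top\delta=0$ the coordinate is constant in $s$, so the choice of $\sigma_i'$ there is irrelevant.
\item Alternatively, one can interpret $J_\sigma$ along the path as the a.e.\ derivative, which is implicitly what the paper's ``almost everywhere'' footnote intends.
\end{itemize}
Either way, the integrand agrees with $\phi'$ almost everywhere, and this completes the proof.
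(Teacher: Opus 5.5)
Your proof is correct and follows the same route as the paper. You cancel $y$, write $\mathrm{MLP}(\mathrm{Attn}_x([t,x]))-\mathrm{MLP}(\mathrm{Attn}_x([x]))$ as the integral of $\tfrac{\mathrm{d}}{\mathrm{d}s}\mathrm{MLP}(\mathrm{Attn}_x([x])+s\,\delta_{\rm attn}(t,x))$ over $[0,1]$, and use the chain rule to recover exactly $g(t,x)$. Your extra argument for a merely Lipschitz, coordinatewise $\sigma$ is a genuine refinement: $\phi$ is absolutely continuous, each coordinate is differentiable a.e.\ in one dimension, and coordinates with $w_i^\top\delta=0$ contribute nothing. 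The paper instead applies the chain rule formally and never addresses that the segment may lie in the non-differentiability set of $\sigma$.
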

    Theorem~\ref{thm:error-decomp} shows that the context-conditioned error is the vector sum of the baseline error and the contextual correction.
    Considering that all three terms in Theorem~\ref{thm:error-decomp} are vectors in $\R^d$, achieving $|e(t,x)| < |e(x)|$ requires more than directional alignment alone: the contextual correction must not only point sufficiently toward the negative baseline error but also have a controlled magnitude, as illustrated in Figure~\ref{fig:vector_case}. 
    We formalize these geometric requirements of the contextual information as the following theorem.
    
    \begin{theorem}
        \label{thm:decrease}
    
        For a single-layer Transformer with a single context, the contextual correction vector satisfies the following norm and direction conditions.

        \underline{Vector Norm}.
            Let $L_{\mathrm{MLP}}:=1+\|W_1\|\,\|W_2\|$, we have $\|g(t,x)\| \le L_{\mathrm{MLP}} \alpha_{x\leftarrow t}\,\|W_O\|\|v_t-v_x\|$

        \underline{Vector Direction}.
            If $g(t,x)\neq 0$ and $e(x) \neq 0$, define
            \[
            \rho(t,x) := \frac{\langle -e(x),\, g(t,x)\rangle}{\|e(x)\|\,\|g(t,x)\|} \in[-1,1]
            \]
            Then, if introducing the new information reduces the error, i.e., if $\|e(t,x)\|<\|e(x)\|$, we must have
            \[
            \rho(t,x) > \frac{\|g(t,x)\|}{2\,\|e(x)\|}
            \]
    \end{theorem}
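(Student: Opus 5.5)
Both parts of Theorem~\ref{thm:decrease} reduce to Definition~\ref{def:context_information_vector} and Theorem~\ref{thm:error-decomp}. We bound the integrand pointwise for the norm, and use elementary inner-product algebra for the direction.

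\emph{Norm part.} For every $s\in[0,1]$ we have, almost everywhere, $\|J_{\mathrm{MLP}}(\cdot)\|\le \|\Id\|+\|W_2\|\,\|J_\sigma(\cdot)\|\,\|W_1\|\le 1+\|W_1\|\|W_2\|=L_{\mathrm{MLP}}$. Moving the norm inside the integral in Definition~\ref{def:context_information_vector} then gives $\|g(t,x)\|\le L_{\mathrm{MLP}}\|\delta_{\rm attn}(t,x)\|$.

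It remains to compute $\delta_{\rm attn}$. With only $x$ present, softmax over a single key gives weight $1$, so $\mathrm{Attn}_x([x])=x+W_Ov_x$. With context, $\mathrm{Attn}_x([t,x])=x+W_O(\alpha_{x\leftarrow t}v_t+\alpha_{x\leftarrow x}v_x)$. Since $\alpha_{x\leftarrow t}+\alpha_{x\leftarrow x}=1$, the residual terms cancel and
\[
\delta_{\rm attn}(t,x)=\alpha_{x\leftarrow t}W_O(v_t-v_x).
\]
Submultiplicativity then yields $\|g(t,x)\|\le L_{\mathrm{MLP}}\alpha_{x\leftarrow t}\|W_O\|\|v_t-v_x\|$. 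The one technical point is that $J_\sigma$ is bounded only almost everywhere along the segment. I would handle this by noting that Theorem~\ref{thm:error-decomp} already relies on the fundamental theorem of calculus for the Lipschitz map $\mathrm{MLP}$. Alternatively, one can bypass the integral entirely: $\|g\|=\|\mathrm{MLP}(a+\delta)-\mathrm{MLP}(a)\|\le L_{\mathrm{MLP}}\|\delta\|$, because $\mathrm{MLP}$ is $L_{\mathrm{MLP}}$-Lipschitz, being the identity plus $W_2\sigma(W_1\cdot)$ with $\sigma$ $1$-Lipschitz. I would present this Lipschitz route, since it avoids any measure-theoretic subtlety.

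\emph{Direction part.} By Theorem~\ref{thm:error-decomp},
\[
\|e(t,x)\|^2=\|e(x)\|^2+2\langle e(x),g(t,x)\rangle+\|g(t,x)\|^2.
\]
The hypothesis $\|e(t,x)\|<\|e(x)\|$ is therefore equivalent to $\|g\|^2<2\langle -e(x),g\rangle=2\rho(t,x)\|e(x)\|\|g\|$. Dividing by $2\|e(x)\|\|g\|>0$, which is allowed since both vectors are nonzero, gives $\rho(t,x)>\|g(t,x)\|/(2\|e(x)\|)$.

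There is no genuine obstacle here. The only care needed is the a.e.\ Jacobian bound, which the Lipschitz argument avoids, and the correct evaluation of single-token attention as weight $1$.
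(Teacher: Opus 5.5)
Your proof is correct and follows the paper's argument: the paper likewise bounds $\|J_{\mathrm{MLP}}\|\le L_{\mathrm{MLP}}$ inside the integral, uses $\delta_{\rm attn}(t,x)=\alpha_{x\leftarrow t}W_O(v_t-v_x)$ (Lemma~\ref{lem:attn-diff}), and expands $\|e(x)+g(t,x)\|^2<\|e(x)\|^2$ exactly as you do. Your preferred Lipschitz variant is the paper's own Lemma~\ref{lem:mlp-lipschitz} applied to $g=\mathrm{MLP}(a+\delta)-\mathrm{MLP}(a)$, which is valid given Theorem~\ref{thm:error-decomp}; note, though, that it moves the a.e.\ issue into that identity rather than removing it, since the identity is itself obtained by the chain rule along the segment.
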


    $\rho(t,x)$ of Theorem~\ref{thm:decrease} is the cosine value of the angle between the contextual correction vector and the negative baseline error.
    We refer to $\rho(t,x)$ as the \textbf{context-error angle}.
    Intuitively, Theorem~\ref{thm:decrease} requires that the context-error angle not be too small, i.e, the contextual correction vector should align as closely as possible with the negative direction of the baseline error vector. 
    Simultaneously, the norm of the contextual correction vector must not be excessively large, as this would cause the model to over-attend to the contextual correction vector and ignore the user query.

    Besides the condition required for contextual information to reduce error, Theorem~\ref{thm:decrease} also provides an upper bound on the norm of the contextual correction vector.
    For a fixed model, this upper bound is mainly determined by two factors:
    \textit{(i)} \emph{relevance} $\alpha_{x \leftarrow t}$: the attention weight that the context receives from the user input;
    \textit{(ii)} \emph{complementarity} $\|v_t-v_x\|$: the complementarity between the context and the user query in the projection space induced by $W_V$.

\subsection{Single Layer, Multiple Context}
    Considering that a single context could not fully cover the content needed for the user query, most approaches include multiple pieces of context in the input.
    Therefore, in this part, we discuss the effect of introducing a new piece of context on the error when some context already exists.
    Let the existing context set be \(T=\{t_1,\dots,t_n\}\) and the new context be $t_{n+1}$.
    We can derive that:
    \begin{corollary}
        \label{cor:decrease_multi_info}
        Let
        \[
        \bar v_T := \sum_{i=1}^n \alpha_{x\leftarrow t_i}\,v_{t_i}+\alpha_{x\leftarrow x}\,v_x.
        \]
        Then the conclusion of Theorem~\ref{thm:decrease} still holds after replacing $v_x$ with $\bar v_T$ and $t$ with $t_{n+1}$.
    \end{corollary}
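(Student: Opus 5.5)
We follow the single-context argument, but now the baseline is the input $E_T=[t_1,\dots,t_n,x]$ and the perturbed input is $E_{T\cup\{t_{n+1}\}}=[t_1,\dots,t_n,t_{n+1},x]$. We write $\alpha_{x\leftarrow\cdot}$ for the weights in the \emph{augmented} input, matching how the corollary defines $\bar v_T$.

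\textbf{Step 1: decomposition with respect to the new baseline.} Define $\delta_{\rm attn}:=\mathrm{Attn}_x(E_{T\cup\{t_{n+1}\}})-\mathrm{Attn}_x(E_T)$, and define $g(t_{n+1},x)$ exactly as in Definition~\ref{def:context_information_vector}, with $\mathrm{Attn}_x([x])$ replaced by $\mathrm{Attn}_x(E_T)$. The fundamental theorem of calculus along the segment from $\mathrm{Attn}_x(E_T)$ to $\mathrm{Attn}_x(E_{T\cup\{t_{n+1}\}})$ then gives the analogue of Theorem~\ref{thm:error-decomp}:
\[
e(T\cup\{t_{n+1}\},x)=e(T,x)+g(t_{n+1},x).
\]
This uses only that the MLP is Lipschitz, hence absolutely continuous along segments with a.e.\ Jacobian $J_{\mathrm{MLP}}$.

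\textbf{Step 2: direction condition.} This part is purely geometric and independent of the attention structure. Expand
\[
\|e(T,x)+g\|^2<\|e(T,x)\|^2 \iff 2\langle -e(T,x),g\rangle>\|g\|^2,
\]
and divide by $2\|e(T,x)\|\,\|g\|$. This yields $\rho>\|g\|/(2\|e(T,x)\|)$, exactly as in Theorem~\ref{thm:decrease}.

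\textbf{Step 3: norm bound, the main step.} Let $\beta_{x\leftarrow w}$ be the softmax weights without $t_{n+1}$. Adding a key rescales the old weights uniformly: since $\beta_{x\leftarrow w}\propto\exp(S_{xw})$ with a normaliser that does not contain $e^{S_{x t_{n+1}}}$,
\[
\alpha_{x\leftarrow w}=(1-\alpha_{x\leftarrow t_{n+1}})\,\beta_{x\leftarrow w}
\]
for every old $w$. Writing $u:=\sum_{w\neq t_{n+1}}\beta_{x\leftarrow w}v_w$ for the old attention average, we have $\bar v_T=(1-\alpha_{x\leftarrow t_{n+1}})\,u$. The new average is $(1-\alpha)u+\alpha v_{t_{n+1}}$ with $\alpha=\alpha_{x\leftarrow t_{n+1}}$. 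The residual term $E_{\bullet(x)}$ cancels, so
\[
\delta_{\rm attn}=W_O\,\alpha\,(v_{t_{n+1}}-u).
\]
Since $\|J_{\mathrm{MLP}}\|\le L_{\mathrm{MLP}}$ a.e., this gives
\[
\|g\|\le L_{\mathrm{MLP}}\,\alpha\,\|W_O\|\,\|v_{t_{n+1}}-u\|.
\]

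The obstacle is reconciling $u$ with the stated $\bar v_T$. The weights in $\bar v_T$ as written sum to $1-\alpha$, not to $1$, so literally $\bar v_T\neq u$. I would resolve this by reading the corollary's $\bar v_T$ as the normalised attention-weighted value average at $x$ over the existing tokens. That is, $\alpha_{x\leftarrow\cdot}$ in the definition of $\bar v_T$ denotes the weights computed on $E_T$, so that $\bar v_T=u$. This is the role $v_x$ played in the single-context case, where the only existing token was $x$ with weight $1$. With this reading the bound becomes
\[
\|g\|\le L_{\mathrm{MLP}}\,\alpha_{x\leftarrow t_{n+1}}\,\|W_O\|\,\|v_{t_{n+1}}-\bar v_T\|,
\]
which is the claimed replacement.

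If one insists on augmented-input weights in $\bar v_T$, I would instead note the identity
\[
v_{t_{n+1}}-u=\frac{v_{t_{n+1}}-\alpha v_{t_{n+1}}-\bar v_T}{1-\alpha}.
\]
This gives the same structure, but the bound then holds only up to that reparametrisation, so the corollary as stated would not be recovered exactly.
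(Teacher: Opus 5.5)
Your proposal is correct and follows the paper's route. You define $\delta_{\rm attn}$ and $g$ relative to the new baseline $\mathrm{Attn}_x([T,x])$, obtain $e(T\cup\{t_{n+1}\},x)=e(T,x)+g$ by the fundamental theorem of calculus, and rerun the norm and direction arguments of Theorem~\ref{thm:decrease}, exactly as the paper does.

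Where the paper only asserts $\delta_{\rm attn}=W_O\,\alpha_{x\leftarrow t_{n+1}}(v_{t_{n+1}}-\bar v_T)$ ``analogously to Lemma~\ref{lem:attn-diff}'', you derive it from the uniform rescaling of the old softmax weights, and you rightly note that it needs the weights in $\bar v_T$ to be those computed on $[T,x]$. With augmented-input weights the stated bound genuinely fails: take $W_2=0$, $W_O=\Id$, $v_{t_{n+1}}=0$, so $\|g\|=\alpha\|u\|$ while the bound gives only $\alpha(1-\alpha)\|u\|$. So drop your opening remark that the corollary uses augmented-input weights.
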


    Compared with Theorem~\ref{thm:decrease}, Corollary~\ref{cor:decrease_multi_info} shows that the multi-context case can be viewed as treating the existing context and the user query as a whole, using it as the effective user input.
    Each existing piece of context is weighted by its attention coefficient to the user query.

\subsection{Multi Layer, Single Context}
    In this part, we discuss how context functions in a multi-layer Transformer.
    Consider a Transformer with $L$ layers. Denote the hidden states at layer $\ell$ by
    \[
    H^{(\ell)}(t,x) = [h^{(\ell)}_u(t,x)]_{u\in \{t, x\}},
    \quad \ell=0,1,\dots,L,
    \]
    where \(H^{(0)}(t,x)\) is the input embedding.
    The update of layer $\ell$ at position $x$ is given by
    \[
    \tilde h^{(\ell)}_x(t,x) := h^{(\ell-1)}_x(t,x) + \mathrm{Attn}^{(\ell)}_x\bigl(H^{(\ell-1)}(t,x)\bigr),
    \]
    \[
    h^{(\ell)}_x(t,x) := \tilde h^{(\ell)}_x(t,x) + \mathrm{MLP}^{(\ell)}\bigl(\tilde h^{(\ell)}_x(t,x)\bigr).
    \]
    The overall model output at position $x$ is denoted by $f_x^{(L)}(t,x) := h^{(L)}_x(t,x)$, and we define $e^{(L)}(t,x) := f_x^{(L)}(t,x)-y$ and $e^{(L)}(x) := f_x^{(L)}(x)-y$.
    
    For layer $\ell$, the layer of the attention output at position $x$ is
    \[
    \begin{split}
        \delta^{(\ell)}_{\rm attn}(t,x) &:= \mathrm{Attn}^{(\ell)}_x([t,x])-\mathrm{Attn}^{(\ell)}_x([x]) \\
        &= \alpha^{(\ell)}_{x\leftarrow t}\,W_O^{(\ell)}\,\Delta v^{(\ell)},        
    \end{split}
    \]
    where
    \(
    \Delta v^{(\ell)}:=v_t^{(\ell)}-v_x^{(\ell)}
    \),
    and
    \(
    \alpha^{(\ell)}_{x\leftarrow t}
    \)
    is the attention weight of position $x$ on $t$ at layer $\ell$.

    In analogy to Definition~\ref{def:context_information_vector}, we define the contextual correction vector at layer $\ell$ as $g^{(\ell)}(t,x)$ by
    \[
    \int_0^1 J_{\mathrm{MLP}^{(\ell)}}\!\bigl(\mathrm{Attn}^{(\ell)}_x([x]) + s\,\delta^{(\ell)}_{\rm attn}(t,x)\bigr)\, \delta^{(\ell)}_{\rm attn}(t,x)\,\mathrm ds
    \]
    Let the mapping composed of layers $\ell+1$ to $L$ be
    \[
    \Phi^{(\ell+1:L)}(z)
    := f^{(L)}\circ\cdots\circ f^{(\ell+1)}(z).
    \]
    Following Definition~\ref{def:context_information_vector}, the local vector $g^{(\ell)}(t,x)$ at layer $\ell$, after being propagated through the upper layers, yields an amplified vector in the output space:
    \[
    \resizebox{\linewidth}{!}{$
    \tilde g^{(\ell)}(t,x) :=
    \int_0^1
    J_{\Phi^{(\ell+1:L)}}\!\bigl(h^{(\ell)}_x(x)+s\,g^{(\ell)}(t,x)\bigr)\,
    g^{(\ell)}(t,x)\,\mathrm ds
    $}
    \]
    Due to the residual connections, we define the overall contextual correction vector as $g^{(L)}(t,x) := \sum_{\ell=1}^L \tilde g^{(\ell)}(t,x)$.
    It can be derived that
    \[
    e^{(L)}(t,x) = e^{(L)}(x) + g^{(L)}(t,x),
    \]
    and hence, we obtain:

    \begin{corollary}
        \label{cor:decrease_multi_layer_transformer}

        In a multi-layer Transformer, the contextual correction vector satisfies the following norm and direction conditions.
    
        \underline{Vector Norm.}
            Let $L_{\mathrm{MLP}}^{(\ell)}:=1+\|W^{(\ell)}_1\|\,\|W^{(\ell)}_2\|$, we have
            \[
            \begin{split}
                &\|g^{(L)}(t,x)\| \le\\
                &\sum_{\ell=1}^L \Bigl(\prod_{j=\ell+1}^L L_{\rm blk}^{(j)}\Bigr)\, L_{\mathrm{MLP}}^{(\ell)}\, \alpha^{(\ell)}_{x\leftarrow t}\, \|W_O^{(\ell)}\|\,\|v^{(\ell)}_t - v^{(\ell)}_x\|.
            \end{split}
            \]
    
        \underline{Vector Direction.}
            If \(\|g^{(L)}(t,x)\|>0\) and $e^{(L)}(x) \neq 0$, define
            \[
            \rho^{(L)}(t,x)
             :=\
            \frac{\inner{-e^{(L)}(x)}{g^{(L)}(t,x)}}
            {\|e^{(L)}(x)\|\,\|g^{(L)}(t,x)\|}
             \in[-1,1]
            \qquad\bigl.
            \]
            Then, if introducing the new information reduces the error, i.e., if $\|e^{(L)}(t,x)\|<\|e^{(L)}(x)\|$, we must have
            \[
            \rho^{(L)}(t,x) > \frac{\|g^{(L)}(t,x)\bigr\|}{2\,\|e^{(L)}(x)\|}
            \]
    \end{corollary}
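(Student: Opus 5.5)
The plan is to take the decomposition $e^{(L)}(t,x) = e^{(L)}(x) + g^{(L)}(t,x)$ as given, just as the paper states it before the corollary. With that in hand, both parts reduce to the single-layer argument of Theorem~\ref{thm:decrease}, applied layer by layer and then propagated upward.

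\emph{Direction part.} This is purely geometric and identical to the single-layer case. Write $e := e^{(L)}(x)$ and $g := g^{(L)}(t,x)$. Then
\[
\|e+g\|^2 = \|e\|^2 + 2\inner{e}{g} + \|g\|^2 = \|e\|^2 - 2\rho^{(L)}(t,x)\,\|e\|\,\|g\| + \|g\|^2 .
\]
The hypothesis $\|e+g\|<\|e\|$ gives $\|g\|^2 < 2\rho^{(L)}\|e\|\|g\|$. Dividing by $2\|e\|\|g\|>0$ yields $\rho^{(L)} > \|g\|/(2\|e\|)$.

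\emph{Norm part.} Start from the triangle inequality $\|g^{(L)}\| \le \sum_{\ell} \|\tilde g^{(\ell)}\|$, and bound each summand in two stages.

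\textbf{Local bound.} Since $\|J_\sigma\|\le 1$ almost everywhere, the Jacobian \eqref{equ:mlp_jacobian_matrix} satisfies $\|J_{\mathrm{MLP}^{(\ell)}}(z)\| \le 1+\|W_1^{(\ell)}\|\|W_2^{(\ell)}\| = L^{(\ell)}_{\mathrm{MLP}}$ almost everywhere. Moving the norm inside the integral defining $g^{(\ell)}$ gives
\[
\|g^{(\ell)}\| \le L^{(\ell)}_{\mathrm{MLP}}\,\|\delta^{(\ell)}_{\rm attn}\|.
\]
Using the stated identity $\delta^{(\ell)}_{\rm attn} = \alpha^{(\ell)}_{x\leftarrow t} W_O^{(\ell)} \Delta v^{(\ell)}$, this becomes
\[
\|g^{(\ell)}\| \le L^{(\ell)}_{\mathrm{MLP}}\,\alpha^{(\ell)}_{x\leftarrow t}\,\|W_O^{(\ell)}\|\,\|v^{(\ell)}_t-v^{(\ell)}_x\|.
\]
The identity for $\delta^{(\ell)}_{\rm attn}$ comes from the same computation as in Theorem~\ref{thm:decrease}. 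Without context the attention at $x$ outputs $W_O v_x$. With context it outputs $W_O(\alpha v_t + (1-\alpha)v_x)$. The difference is $\alpha W_O(v_t-v_x)$.

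\textbf{Propagation bound.} Let $L^{(j)}_{\rm blk}$ be a Lipschitz constant of block $j$ at position $x$, i.e.\ an a.e.\ bound on its Jacobian. By the chain rule, $\|J_{\Phi^{(\ell+1:L)}}\| \le \prod_{j=\ell+1}^L L^{(j)}_{\rm blk}$ almost everywhere. Putting this inside the integral defining $\tilde g^{(\ell)}$ gives
\[
\|\tilde g^{(\ell)}\| \le \Bigl(\prod_{j=\ell+1}^L L^{(j)}_{\rm blk}\Bigr)\|g^{(\ell)}\|.
\]
Summing over $\ell$ gives the claimed bound.

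\emph{Main obstacle.} The delicate step is the propagation bound. The corollary leaves $L^{(j)}_{\rm blk}$ implicit. It must be read as a bound on the Jacobian of the upper-layer map restricted to the $x$-coordinate along the segment $h^{(\ell)}_x(x)+s\,g^{(\ell)}$, with the other positions' states held as in the definition of $\Phi$.

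I would handle this by defining $L^{(j)}_{\rm blk}$ as exactly that supremum. Where a concrete value is needed, I would bound it by $(1+\|W_O^{(j)}\|\|W_V^{(j)}\|)L^{(j)}_{\mathrm{MLP}}$, composing the residual attention and residual MLP layers in the stated update rule. The attention-softmax dependence of $\alpha$ on the $x$-state is absorbed into this assumed constant rather than bounded explicitly.

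The a.e.\ Jacobian bounds are legitimate inside the integrals by the same argument used for Definition~\ref{def:context_information_vector}. The maps involved are Lipschitz, so the integral representation holds as in the fundamental theorem of calculus for absolutely continuous functions.
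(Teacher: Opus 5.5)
Your proposal is correct and follows the paper's proof essentially step for step. You take $e^{(L)}(t,x)=e^{(L)}(x)+g^{(L)}(t,x)$ as given and expand $\|e^{(L)}(x)+g^{(L)}(t,x)\|^2$ for the direction condition; for the norm bound you combine the a.e.\ estimate $\|J_{\mathrm{MLP}^{(\ell)}}\|\le L_{\mathrm{MLP}}^{(\ell)}$, the identity $\delta^{(\ell)}_{\rm attn}=\alpha^{(\ell)}_{x\leftarrow t}W_O^{(\ell)}\Delta v^{(\ell)}$, the product bound $\|J_{\Phi^{(\ell+1:L)}}\|\le\prod_{j=\ell+1}^L L_{\rm blk}^{(j)}$, and the triangle inequality over $\ell$, exactly as the paper does. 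As in the paper, $L_{\rm blk}^{(j)}$ should just be an assumed Lipschitz constant: your optional concrete value $(1+\|W_O^{(j)}\|\|W_V^{(j)}\|)L_{\mathrm{MLP}}^{(j)}$ ignores how the softmax weights depend on the $x$-state, so it is not a valid bound in general and is best dropped.
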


    The definitions of $W^{(\ell)}_1$ and $W^{(\ell)}_2$ are the same as Equation~\ref{equ:mlp_jacobian_matrix}.
    From Corollary~\ref{cor:decrease_multi_layer_transformer}, we can see that for the upper bound of the vector norm, each layer produces a local vector \(g^{(\ell)}\), whose norm is still jointly determined by the relevance (\(\alpha^{(\ell)}_{x\leftarrow t}\)) and the complementarity (\(\|v^{(\ell)}_t - v^{(\ell)}_x\|\)) of the corresponding layer.
    The upper layers scale or amplify the vector from the lower layers through their Lipschitz constants \(\prod_{j=\ell+1}^L L_{\rm blk}^{(j)}\).
    If the Lipschitz constants are smaller than $1$, i.e., the model is stable to input perturbations, then increasing the number of layers shrinks the vector brought by the contextual information, and vise versa.

    \section{Experiment}
        \label{sec:experiment}
        In this section, we empirically validate the three conclusions about contextual information:
\textit{(i)} how the contextual correction vector affects the error (\S\ref{subsec:vector_relation_experiment});
\textit{(ii)} how relevance and complementarity affect the norm of the contextual correction vector (\S\ref{subsec:vector_upper_bound_experiment});
\textit{(iii)} how multi-context and multi-layer affect the above conclusion (\S\ref{subsec:vector_change_cross_multi_layer_context}).

\subsection{Experiment Setup}
    \paragraph{Dataset}
        We mainly conduct experiments on three types of mainstream tasks to validate our theoretical conclusions, which include:
        \textit{(i)} Math: MATH~\cite{hendrycks2021measuring}, CHAMP~\cite{mao-etal-2024-champ};
        \textit{(ii)} Reason: TheoremQA~\cite{chen-etal-2023-theoremqa}, MMLU-Redux~\cite{gema-etal-2025-done}, GPQA~\cite{rein2024gpqa};
        \textit{(iii)} QA: NaturalQuestions~\cite{kwiatkowski-etal-2019-natural}, FinQA~\cite{chen-etal-2021-finqa}.
        We introduce these datasets in detail in Appendix~\ref{app:dataset}.

    \paragraph{Model}
        We conduct experiments on four mainstream LLMs, including Llama-3.1-8B-Instruct (Llama-3.1-8B) \cite{grattafiori2024llama3herdmodels}, DeepSeek-R1-Distill-Llama-8B (Llama-R1-8B) \cite{deepseekai2025deepseekr1incentivizingreasoningcapability}, Qwen3-8B \cite{yang2025qwen3technicalreport}, and Ministral-8B-Instruct-2410 (Ministral-8B) \cite{mistral2024ministral}.
        Given that these LLMs cover different families, we believe they can adequately validate our theoretical conclusions.

    \paragraph{Setup}
        We first filter the bad cases under the no-context setting, ensuring that the experimental data consist of queries that the models cannot solve on their own.
        To reflect mainstream ways of leveraging context, we construct contextual information as follows:
        \textit{(i)} ICL: we use demonstrations from the training set as context;
        \textit{(ii)} RAG: we use the knowledge provided in the dataset as context;
        \textit{(iii)} ME: at test time, we use historical test instances as context.
        The detailed methodological design and experimental configurations are provided in Appendix~\ref{app:experiment_setup}, and the computation of various parameters is detailed in Appendix~\ref{app:calculation}.
        \textbf{We merge the data from all datasets, models, and methods as our experimental data}.
        To uniformly reflect the relationship, we normalize the results of all settings.
        All our experiments are running on two H100-80G and are based on Transformers~\cite{wolf-etal-2020-transformers} and vLLM~\cite{kwon2023efficient}.

\subsection{Effect of Contextual Correction Vector on Error}
    \label{subsec:vector_relation_experiment}

    \begin{figure}[t]
        \centering
        \small
        \includegraphics[width=\linewidth]{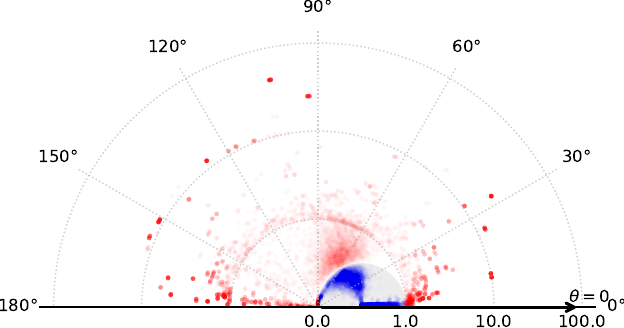}
        \caption{
            The variation of error change $\|e(t,x)\| - \|e(x)\|$ with respect to the contextual correction vector $g(t, x)$, where each point represents one data instance.
            The length of the vector from the axis origin to each point represents $\frac{\|g(t,x)\|}{2\|e(x)\|}$, and the angle with the positive $x$-axis represents $\arccos{\rho(t,x)}$ (i.e., the angle between the contextual correction vector and the negative baseline error vector).
            \textblue{Blue} and \textred{red} indicate that  the error change is smaller or larger than $0$, respectively, and darker colors indicate larger absolute values.
            The \textgray{gray region} corresponds to $\frac{\|g(t,x)\|}{2\|e(x)\|} < \rho(t,x)$.
        }
        \label{fig:2d_polar}
    \end{figure}

    \begin{figure}[t]
        \centering
        \small
        \includegraphics[width=\linewidth]{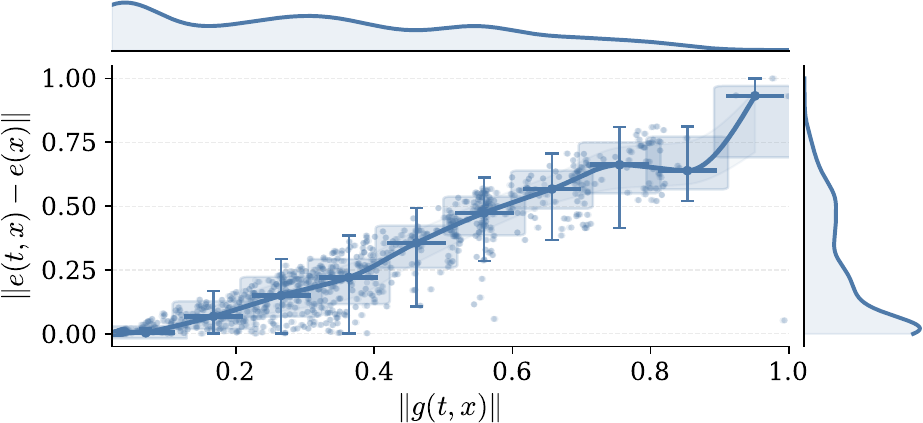}
        \caption{
            The variation of error change norm $\|e(t,x)-e(x)\|$ with respect to the contextual correction vector norm $\|g(t,x)\|$, where each point represents one data instance.
            The curves at the top and to the right show the distributions of the data points along the $x$-axis and $y$-axis, respectively.
            The Pearson correlation coefficient for the fitted points is $0.903$.
        }
        \label{fig:2d_line_g_norm}
    \end{figure}

    In this part, we validate the conclusion in Theorem~\ref{thm:decrease} that the error reduction achieved by introducing contextual information depends on the contextual correction vector.
    We first analyze how the norm and direction of the contextual correction vector affect the error reduction, where the experimental results are shown in Figure~\ref{fig:2d_polar}, from which we can observe that:
    \textit{(i)} Almost all data points with decreased error are concentrated in the region where $\frac{\|g(t,x)\|}{2\|e(x)\|} < \rho(t,x)$, which empirically validates our findings on error reduction in Theorem~\ref{thm:decrease}.
    \textit{(ii)} There are many points with increased error around $\rho(t,x)=1, \frac{\|g(t,x)\|}{2\|e(x)\|}=1$, indicating that for data whose direction is well aligned with the negative error direction, an excessively large norm causes the model to over-emphasize the contextual information while neglecting the user query, leading to significant increases in error.

    In addition, we analyze how error change varies with the contextual correction vector, which is shown in Figure~\ref{fig:2d_line_g_norm}. 
    From the figure, we can see that the error change exhibits an overall positive correlation with the norm of the contextual correction vector, which corroborates the quantitative relationship between them in Theorem~\ref{thm:decrease}.
    Also, we can observe a slight kink around $|g(t,x)|\approx 0.9$ on the plot. 
    Judging from the point density, this is likely due to the smaller number of samples in this region.

\subsection{Effect of Contextual Complementarity and Relevance on Contextual Correction Vector}
    \label{subsec:vector_upper_bound_experiment}

    \begin{figure}[t]
        \centering
        \small
        \includegraphics[width=\linewidth]{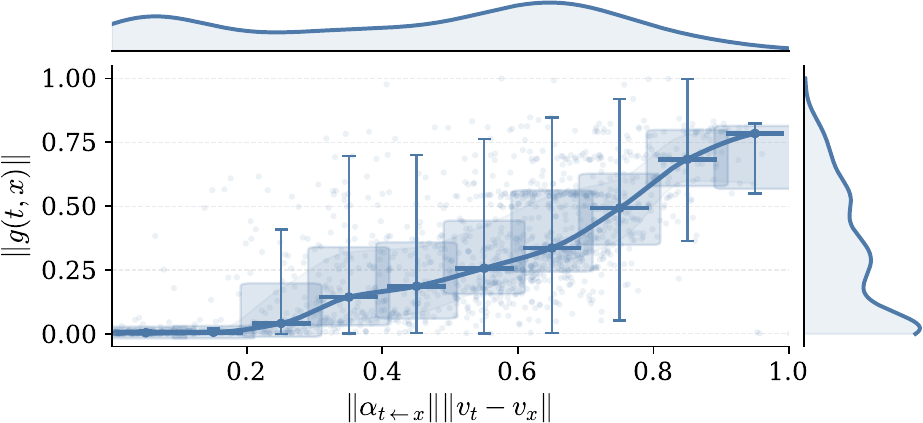}
        \captionof{figure}{
            The variation of contextual correction vector norm $\|g(t,x)\|$ with respect to the relevance and complementarity $\alpha_{x \leftarrow t}\|v_t-v_x\|$, where each point represents one data instance.
            The curves at the top and to the right show the distributions of the data points along the $x$-axis and $y$-axis, respectively.
            The Pearson correlation coefficient for the fitted points is $0.770$.
        }
        \label{fig:2d_line_g_bound}
    \end{figure}

    In this part, we validate the conclusion in Theorem~\ref{thm:decrease} regarding how different factors affect the upper bound of the contextual correction vector.
    We compute how the contextual correction vector varies with relevance ($\alpha_{x \leftarrow t}$) and complementarity ($\|v_t-v_x\|$).
    The experimental results are shown in Figure~\ref{fig:2d_line_g_bound}.
    From the figure, we can see that:
    \textit{(i)} The norm of the contextual correction vector is overall positively correlated with $\alpha_{x \leftarrow t}\|v_t-v_x\|$, which verifies the quantitative relationship of the vector with relevance and complementarity in Theorem~\ref{thm:decrease}.
    \textit{(ii)} Overall, the relationship between relevance, complementarity, and the vector norm is not strictly linear since Theorem~\ref{thm:decrease} only provides an upper bound for the vector norm that depends on relevance and complementarity. In practice, the vector norm in our experiments is often smaller than this upper bound, so the overall relationship is not strictly linear.
    \textit{(iii)} From the data distribution, most samples have their relevance and complementarity concentrated in the top $50\%$, whereas the vector norm is mostly below $0.1$. This suggests that even for contexts with relatively strong relevance, the model does not fully exploit them, which limits its performance.

\subsection{Effect of Contextual Correction Vector under the Multi-Context and Multi-Layer setting}
    \label{subsec:vector_change_cross_multi_layer_context}

    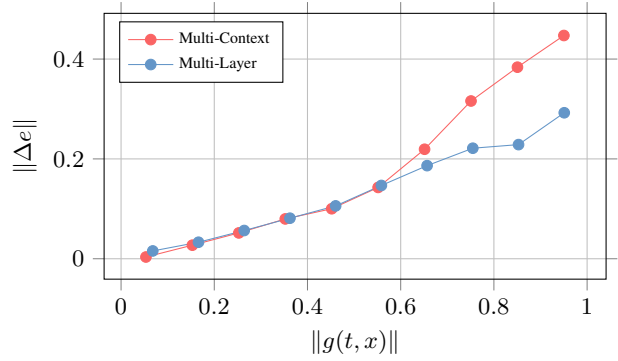
\begin{figure}[t]
        \centering
        \small
        \begin{tikzpicture}
    \begin{axis}[
        width=\linewidth,
        height=0.62\linewidth,
        grid=both,
        xlabel={$\|g(t,x)\|$},
        ylabel={$\|\Delta e\|$},
        legend cell align=left,
        legend pos=north west,
        tick align=outside,
        legend style={
            font=\tiny
        }
    ]
    
        \addplot+[
            color=softred,
            mark=*,
            mark options={draw=softred, fill=softred},
        ] coordinates {
            (0.05338083666424419, 0.0034937888198757765)
            (0.15302495912063951, 0.02717391304347826)
            (0.2526690815770349,  0.05172748447204969)
            (0.3523132040334302,  0.0797748447204969)
            (0.45195732648982556, 0.10015527950310558)
            (0.5516014489462209,  0.14285714285714285)
            (0.6512455714026163,  0.21933229813664595)
            (0.7508896938590116,  0.3159937888198758)
            (0.8505338163154069,  0.38373447204968947)
            (0.9501779387718023,  0.4472049689440994)
        };
        \addlegendentry{Multi-Context}
        
        \addplot+[
            color=softblue,
            mark=*,
            mark options={draw=softblue, fill=softblue},
        ] coordinates {
            (0.06805181549687284, 0.015643802647412757)
            (0.16615162439193884, 0.03309265944645006)
            (0.26425143328700484, 0.056558363417569195)
            (0.3623512421820709,  0.0812274368231047)
            (0.4604510510771369,  0.10589651022864019)
            (0.5585508599722029,  0.14681107099879662)
            (0.6566506688672689,  0.1865222623345367)
            (0.754750477762335,   0.2214199759326113)
            (0.8528502866574009,  0.2286401925391095)
            (0.950950095552467,   0.2924187725631769)
        };
        \addlegendentry{Multi-Layer}

    \end{axis}
\end{tikzpicture}
        \caption{
            The variation of error change norm with respect to the contextual correction vector norm $\|g(t,x)\|$ under the \textred{multi-context} and \textblue{multi-layer} settings, where $\Delta e$ denotes $e(T, x | t_{n+1}) - e(T, x)$ (multi-context) or $e^{(L)}(t,x) - e^{(L)}(x)$ (multi-layer), and each point represents one data instance.
            The curves at the top and to the right show the distributions of the data points along the $x$-axis and $y$-axis, respectively.
        }
        \label{fig:2d_line_g_norm_multi}
    \end{figure}

    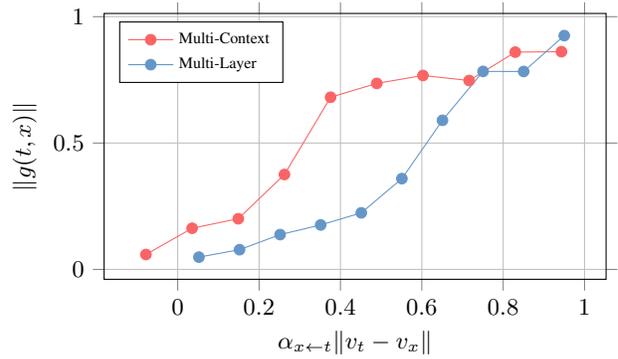
\begin{figure}[t]
        \centering
        \small
        \begin{tikzpicture}
    \begin{axis}[
        width=\linewidth,
        height=0.62\linewidth,
        grid=both,
        xlabel={$\alpha_{x \leftarrow t}\|v_t-v_x\|$ },
        ylabel={$\|g(t,x)\|$},
        legend cell align=left,
        legend pos=north west,
        tick align=outside,
        legend style={
            font=\tiny
        }
    ]
    
        \addplot+[
            color=softred,
            mark=*,
            mark options={draw=softred, fill=softred},
        ] coordinates {
            (-0.07811894185201877, 0.058955637893273384)
            (0.03536726255345689,  0.16283484355957392)
            (0.14885346695893256,  0.20047946990947554)
            (0.2623396713644082,   0.3754414264675411)
            (0.37582587576988385,  0.6811614222788634)
            (0.48931208017535954,  0.7359172424241749)
            (0.6027982845808353,   0.7672877643824263)
            (0.7162844889863109,   0.7473247049544481)
            (0.8297706933917866,   0.8596882108776394)
            (0.9432568977972622,   0.8613993302571804)
        };
        \addlegendentry{Multi-Context}
        
        \addplot+[
            color=softblue,
            mark=*,
            mark options={draw=softblue, fill=softblue},
        ] coordinates {
            (0.05211748502921058, 0.04852136477025831)
            (0.15189459186824106, 0.07797336444837853)
            (0.2516716987072715,  0.13768206324937637)
            (0.351448805546302,   0.17582481693087632)
            (0.4512259123853324,  0.2237044338939406)
            (0.5510030192243629,  0.35913535044660816)
            (0.6507801260633934,  0.5899231512030257)
            (0.7505572329024238,  0.7833728977227006)
            (0.8503343397414543,  0.7830510179447976)
            (0.9501114465804847,  0.925)
        };
        \addlegendentry{Multi-Layer}

    \end{axis}
\end{tikzpicture}
        \caption{
            The variation of contextual correction vector norm $\|g(t,x)\|$ with respect to the relevance $\alpha_{x \leftarrow t}$ and complementarity $\|v_t-v_x\|$ under the \textred{multi-context} and \textblue{multi-layer} settings, where each point represents one data instance.
            The curves at the top and to the right show the distributions of the data points along the $x$-axis and $y$-axis, respectively.
        }
        \label{fig:2d_line_g_bound_multi}
    \end{figure}

    In this part, we validate the conclusions stated in Corollary~\ref{cor:decrease_multi_info} and Corollary~\ref{cor:decrease_multi_layer_transformer} under the multi-context and multi-layer settings.
    For the multi-context setting, we increase the number of input contexts from $1$ to $5$.
    For the multi-layer setting, we collect the computed results from different layers of the model.
    Due to space limitations, we report only the overall trend of the data, where we partition the points along the x-axis into ten equal bins based on the number of points and plot the median value within each bin.
    We also discuss the change of $g(t,x)$ across different context numbers in Appendix~\ref{app:vector_cross_context_number} and different layers in Appendix~\ref{app:vector_cross_layer}.
    
    The results are shown in Figure~\ref{fig:2d_line_g_norm_multi} and Figure~\ref{fig:2d_line_g_bound_multi}.
    We provide the complete experimental results in Appendix~\ref{app:vector_change_cross_multi_layer_context}.
    From the results, we observe that:
    \textit{(i)} Across all settings, the plots show an overall increasing trend, which supports the conclusions in Corollary~\ref{cor:decrease_multi_info} and Corollary~\ref{cor:decrease_multi_layer_transformer}.
    \textit{(ii)} Overall, the values under multi-context are larger than those under multi-layer. This is because, due to the residual stream, the effect of $\|g(t,x)\|$ at the current layer is diluted by information from previous layers under the multi-layer setting, thereby reducing its impact on the output error.

    \section{Theory-Guided Context Selection}
        In this section, we examine how the theoretical insights developed in this work can be leveraged to improve context selection in practical LLM inference.
Our objective is not to introduce new state-of-the-art methods but to validate the predictive power of our theoretical analysis and demonstrate how it can inform principled design choices in real-world settings. Accordingly, we focus on lightweight, theory-guided enhancements on Llama-3.1-8B and Qwen3-8B that incur minimal additional computational overhead. 

\subsection{Error Type Distribution}
    \begin{figure}[t]
        \centering
        \small
        \begin{tikzpicture}
    \begin{axis}[
        ymin=0, ymax=100,
        ybar,
        width=\linewidth,
        height=0.65\linewidth,
        bar width=16pt,
        enlarge x limits=0.2,
        symbolic x coords={ICL,RAG,ME},
        xtick=data,
        ylabel={Ratio},
        point meta=y,
        nodes near coords={
            \pgfmathprintnumber[fixed,fixed zerofill,precision=1]{\pgfplotspointmeta}\%
        },
        nodes near coords align={vertical},
        nodes near coords style={yshift=1.5pt, font=\tiny},
        legend style={
            at={(0.5,-0.15)},
            anchor=north,
            draw=none,
            fill=none,
            legend columns=-1,
            font=\small
        }
    ]
        \addplot+[ybar, bar shift=-9pt] coordinates {
            (ICL,88.0) (RAG,87.8) (ME,89.5)
        };

        \addplot+[ybar, bar shift=9pt] coordinates {
            (ICL,12.0) (RAG,12.2) (ME,10.5)
        };

        \legend{Angle, Norm}
    \end{axis}
\end{tikzpicture}
        \caption{
            Distribution of error types of example where the context-conditioned error vector is greater than the baseline error vector.
            \textblue{Angle} indicates that the direction of the vector is misaligned and \textred{Norm} indicates that the norm of the vector is too large.
        }
        \label{fig:error_count}
    \end{figure}
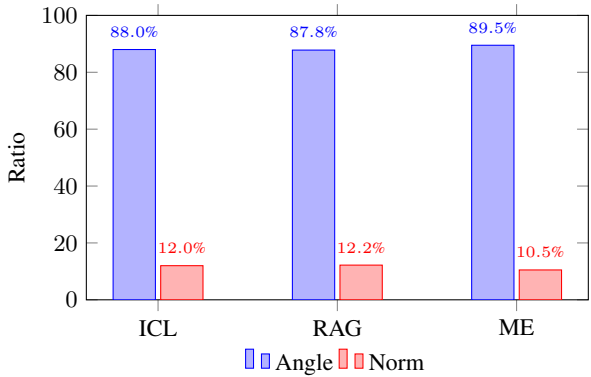

    \begin{figure}[t]
        \centering
        \small
        \includegraphics[width=\linewidth]{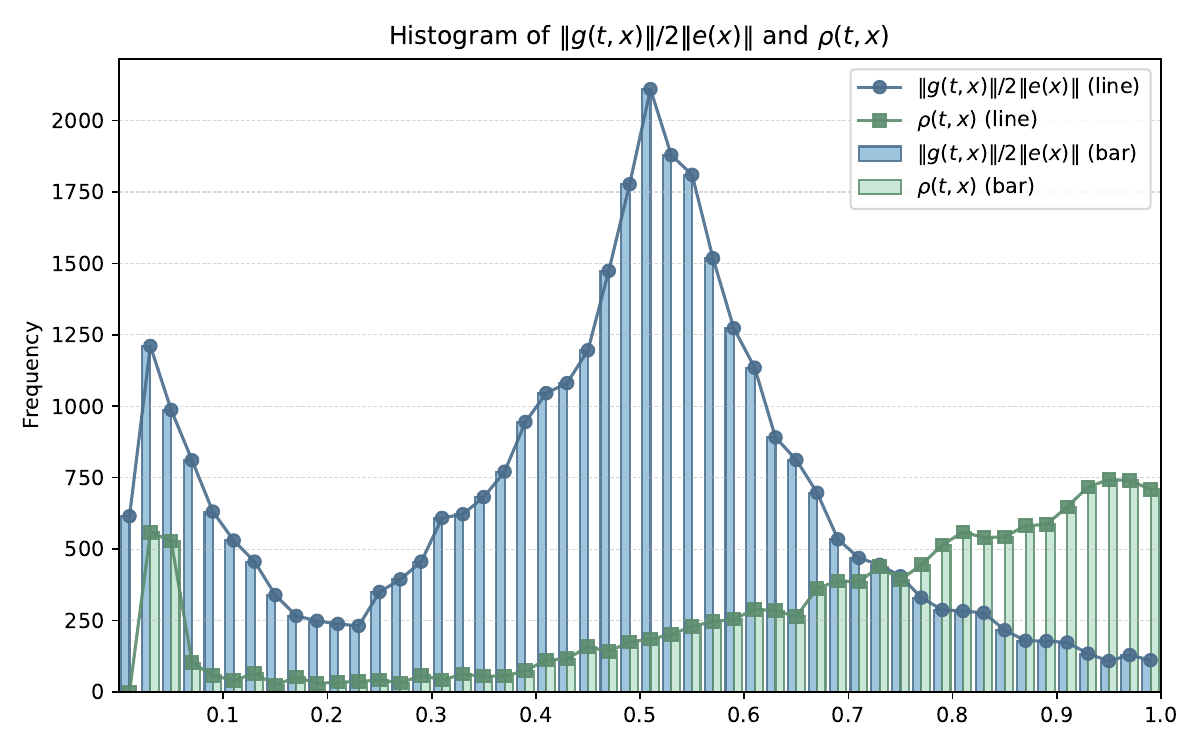}
        \caption{
            Distribution of the vector norm $\|g(t,x)\| / 2\|e(t,x)\|$ and the vector angle $\rho(t,x)$ on examples where the error decreases.
        }
        \label{fig:norm_angle_distribution}
    \end{figure}

    To investigate the main causes of errors, we divide the current bad cases into two categories:
    \textit{(i)} $\frac{\|g(t,x)\|}{2\|e(x)\|} > \rho(t,x)$: the model over-attends to the contextual information;
    \textit{(ii)} $\rho(t,x) < 0$: the contextual information is irrelevant to the original error.
    The experimental results are shown in Figure~\ref{fig:error_count}, from which we can see that the dominant error type is still caused by the misalignment between the directions of the vector and the error.
    We also compute the distributions of $\|g(t,x)\| / 2\|e(t,x)\|$ and $\rho(t,x)$ on examples where the error decreases, as shown in Figure~\ref{fig:norm_angle_distribution}.
    From the figure, we observe that $\rho(t,x)$ is overall skewed toward $1$, indicating that contextual vector is already well aligned with the negative direction of the original error.
    However, most values of $\|g(t,x)\| / 2\|e(t,x)\|$ lie around $0.1$ to $0.5$, which suggests that even for these examples, the vector norm is still insufficient to completely eliminate the error.

    Based on the above analysis, we argue that improvements in context usage should proceed from two aspects:
    \textit{(i)} ensuring that the direction of the context vector is aligned with the negative direction of the error so that the provided information can indeed reduce the error;
    \textit{(ii)} increasing the contextual correction vector norm by enhancing relevance and complementarity, thereby further reducing the error.

\subsection{Context Selection Strategies}
    \begin{table}[t]
        \centering
        \small
        \caption{
            Performance of ICL with top-$1$ selection.
            Zero denotes inference without contextual information.
            The best performance under each setting is marked in \textbf{bold}.
        }
        \begin{adjustbox}{width=\linewidth}
\setlength{\tabcolsep}{3pt} 
\begin{tabular}{cl|ccc}
    \toprule
    \textbf{Model} & \textbf{Method} & \textbf{MATH} & \textbf{GPQA} & \textbf{FinQA} \\
    \midrule
    \multirow{6}{*}{\rotatebox{90}{Llama-3.1-8B}}
    & Zero   & $46.0$ & $31.3$ & $48.6$ \\
    & BM25   & $46.8$ & $42.4$ & $49.2$ \\
    & DICL~\cite{kapuriya2025dicl}   & $47.4$ & $43.0$ & $51.3$ \\
    & GenICL~\cite{zhang-etal-2025-learning-select} & $47.6$ & $43.0$ & $\textbf{51.6}$ \\
    & TopicK~\cite{kweon-etal-2025-topic} & $47.0$ & $42.9$ & $49.8$ \\
    & \textbf{Ours}   & $\textbf{48.0}$ & $\textbf{43.4}$ & $51.1$ \\
    \midrule
    \multirow{6}{*}{\rotatebox{90}{Qwen3-8B}}
    & Zero   & $87.0$ & $64.6$ & $60.1$ \\
    & BM25   & $88.0$ & $63.6$ & $64.2$ \\
    & DICL~\cite{kapuriya2025dicl}   & $88.0$ & $64.6$ & $67.1$ \\
    & GenICL~\cite{zhang-etal-2025-learning-select} & $88.4$ & $65.2$ & $\textbf{67.3}$ \\
    & TopicK~\cite{kweon-etal-2025-topic} & $86.2$ & $64.1$ & $64.2$ \\
    & \textbf{Ours}   & $\textbf{89.2}$ & $\textbf{65.7}$ & $66.7$ \\
    \bottomrule
\end{tabular}
\end{adjustbox}

        \label{tab:performance_icl}
    \end{table}
    \begin{table}[t]
        \centering
        \small
        \caption{
            Performance of RAG with top-$1$ selection. 
            Zero denotes inference without contextual information. 
            TQA denotes TheoremQA and NQ denotes NaturalQuestions.
            InstructRAG refers to the InstructRAG-ICL~\cite{wei2025instructrag}.
        }
        \setlength{\tabcolsep}{3pt} 
\begin{adjustbox}{width=\linewidth}
\begin{tabular}{cl|ccc}
    \toprule
    \textbf{Model} & \textbf{Method} & \textbf{CHAMP} & \textbf{TQA} & \textbf{NQ} \\
    \midrule
    \multirow{6}{*}{\rotatebox{90}{Llama-3.1-8B}}
    & Zero & $14.1$ & $26.8$ & $38.5$ \\
    & BM25 & $22.2$ & $37.6$ & $74.7$ \\
    & DPR~\cite{karpukhin-etal-2020-dense} & $23.0$ & $38.4$ & $75.6$ \\
    & InstructRAG~\cite{wei2025instructrag} & $24.1$ & $39.2$ & $76.3$ \\
    & ReFeedL~\cite{yun2025refeed} & $24.8$ & $40.0$ & $\textbf{77.1}$ \\
    & \textbf{Ours} & $\textbf{25.2}$ & $\textbf{40.8}$ & $76.8$ \\
    \midrule
    \multirow{6}{*}{\rotatebox{90}{Qwen3-8B}}
    & Zero & $39.6$ & $54.6$ & $44.2$ \\
    & BM25 & $42.6$ & $62.8$ & $73.8$ \\
    & DPR~\cite{karpukhin-etal-2020-dense} & $43.7$ & $63.6$ & $74.5$ \\
    & InstructRAG~\cite{wei2025instructrag} & $44.8$ & $64.4$ & $75.1$ \\
    & ReFeedL~\cite{yun2025refeed} & $\textbf{45.2}$ & $65.1$ & $75.6$ \\
    & \textbf{Ours} & $\textbf{45.2}$ & $\textbf{66.3}$ & $\textbf{76.9}$ \\
    \bottomrule
\end{tabular}
\end{adjustbox}

        \label{tab:performance_rag}
    \end{table}
    \begin{table*}[t]
        \centering
        \small
        \caption{
            Performance of ME with top-$1$ selection.
            Zero denotes inference without contextual information.
            TQA denotes TheoremQA, MMLU denotes MMLU-Redux, and NQ denotes NaturalQuestions.
        }
        \begin{tabular}{ll|ccccccc}
    \toprule
    \textbf{Model} & \textbf{Method} & \textbf{MATH} & \textbf{CHAMP} & \textbf{TQA} & \textbf{MMLU} & \textbf{GPQA} & \textbf{NQ} & \textbf{FinQA} \\
    \midrule
    \multirow{6}{*}{Llama-3.1-8B}
    & Zero   & $46.0$ & $14.1$ & $26.8$ & $63.8$ & $31.3$ & $38.5$ & $48.6$ \\
    & BM25   & $46.4$ & $14.4$ & $27.0$ & $64.1$ & $31.8$ & $38.9$ & $49.0$ \\
    & DC~\cite{suzgun2025_DynamicCheatsheet}     & $47.0$ & $14.8$ & $27.6$ & $64.6$ & $32.3$ & $39.3$ & $49.5$ \\
    & ExpRAG~\cite{wei2025evomemorybenchmarkingllmagent} & $47.0$ & $14.8$ & $27.6$ & $64.7$ & $31.8$ & $39.4$ & $49.7$ \\
    & ReMem~\cite{wei2025evomemorybenchmarkingllmagent}  & $47.2$ & $\mathbf{15.2}$ & $27.7$ & $64.9$ & $32.3$ & $\mathbf{39.8}$ & $49.6$ \\
    & \textbf{Ours}   & $\mathbf{47.4}$ & $\mathbf{15.2}$ & $\mathbf{28.0}$ & $\mathbf{65.0}$ & $\mathbf{32.8}$ & $39.7$ & $\mathbf{50.0}$ \\
    \midrule
    \multirow{6}{*}{Qwen3-8B}
    & Zero   & $87.0$ & $39.6$ & $54.6$ & $65.8$ & $64.6$ & $44.2$ & $60.1$ \\
    & BM25   & $87.2$ & $40.0$ & $54.8$ & $66.0$ & $65.2$ & $44.6$ & $60.4$ \\
    & DC~\cite{suzgun2025_DynamicCheatsheet}     & $88.0$ & $40.4$ & $55.4$ & $66.5$ & $65.2$ & $45.0$ & $60.9$ \\
    & ExpRAG~\cite{wei2025evomemorybenchmarkingllmagent} & $88.0$ & $40.7$ & $55.6$ & $66.6$ & $\mathbf{65.7}$ & $45.1$ & $61.1$ \\
    & ReMem~\cite{wei2025evomemorybenchmarkingllmagent}  & $88.2$ & $40.7$ & $55.6$ & $66.8$ & $65.2$ & $45.3$ & $\mathbf{61.5}$ \\
    & \textbf{Ours}   & $\mathbf{88.4}$ & $\mathbf{41.1}$ & $\mathbf{56.0}$ & $\mathbf{66.9}$ & $\mathbf{65.7}$ & $\mathbf{45.5}$ & $61.4$ \\
    \bottomrule
\end{tabular}

        \label{tab:performance_me}
    \end{table*}

    We introduce two strategies to improve context selection: aligning vector direction and increasing vector norm.

    \paragraph{Vector Direction}
        Since computing the angle $\rho(t,x)$ in Theorem~\ref{thm:decrease} depends on the ground-truth answer, we cannot estimate this angle directly.
        Following prior works \cite{xie-etal-2023-doremi,liu-etal-2025-cola}, we use a smaller model to predict this angle.
        Specifically, we take ELECTRA-Large-Discriminator~\cite{Clark2020ELECTRA} as the predictor and merge the training splits of each dataset as the training data.
        The inputs consist of the user query, the context, and the prediction of the LLM without context, while the output is the corresponding $\rho(t,x)$.
        We report the relative error of the trained predictor in Appendix~\ref{app:performance_of_trained_predictor}.

    \paragraph{Vector Norm}
        According to Theorem~\ref{thm:decrease}, the upper bound of $\|g(t,x)\|$ is determined by $\alpha_{x\leftarrow t}\|v_t-v_x\|$.
        Since $\alpha_{x\leftarrow t}, v_t, v_x$ can be directly extracted, we feed the user query and the context into the LLM and directly extract the corresponding quantities from the model as $\|g(t,x)\|$.

    To reduce computational costs, for each user input, we first use the method described in Appendix~\ref{app:experiment_setup} to coarsely select $32$ candidate context passages.
    We then compute $\rho(t,x)$ and $\alpha_{x\leftarrow t}\|v_t-v_x\|$ for these candidates, select all contexts with $\rho(t,x) < 1$, and finally choose the context with the largest $\alpha_{x\leftarrow t}\|v_t-v_x\|$ to guide LLM inference.
    To verify the effectiveness of the above strategy, we conduct experiments under different settings in Table~\ref{tab:performance_icl}, Table~\ref{tab:performance_rag} and Table~\ref{tab:performance_me}, where all settings employ the top-$1$ context during inference.
    From the tables, we can observe that compared with the baselines under each setting, our method achieves comparable or even better performance, leading to $0.6\%$ relative improvement on average.
    Such results demonstrate the effectiveness of the proposed approach, thereby supporting the correctness and practical applicability of our conclusions.

    \section{Related Works}
        LLMs can benefit from contextual information at inference time \cite{mei2025surveycontextengineeringlarge}. 
To understand how LLMs interpret context and to guide future improvements, many studies investigate context related mechanisms \cite{zhou-etal-2024-mystery}. 
For ICL, many works take a bottom up approach. They start from theory and, under the assumption that the provided context and the user query are drawn from the same distribution, they prove convergence results showing that ICL can solve arbitrary tasks \cite{zhang2024trained,mahankali2024one,li2024how,yang2024incontext}. 
Many other works take a top down approach. They analyze the origins of ICL capability and how context guides inference by examining the dynamics during training and inference \cite{park2025competition,nguyen2025differential,smart2025incontext}.
For RAG, instead of analyzing based on a mathematical foundation, existing studies mainly focus on inference. They partition contexts by their effectiveness and observe RAG behavior under different effectiveness conditions to understand its mechanism \cite{ghosh2024quantifying,joren2025sufficient,xu2025a}. 
For ME, mechanistic research is relatively limited. Current studies primarily focus on improving performance by managing and selecting effective memory \cite{suzgun2025_DynamicCheatsheet,wei2025evomemorybenchmarkingllmagent,xu2025amemagenticmemoryllm}.

These existing works suggest that most context related mechanistic studies are task specific and lack a unified explanatory theory. This limits our understanding of context and also makes it difficult to generalize to new context engineering methods in the future. To address this issue, we propose a unified framework for analyzing context. We introduce the contextual correction vector to measure context effectiveness, and we derive the geometric conditions that must hold when the context is effective.

    \section{Conclusion}
        In this paper, we propose a unified theory to explain how context works.  
        We introduce the contextual correction vector to model the effectiveness of context and prove that the context-conditioned error vector equals the vector sum of the contextual correction vector and the baseline error vector.  
        Building on this, we derive the geometric conditions under which context leads to error reduction and provide an upper bound on the norm of the contextual correction vector.  
        We then show that similar conclusions continue to hold under multi-context and multi-layer settings.  
        We conduct experiments across three methods (ICL, RAG, and ME), four LLMs, and seven datasets spanning three mainstream tasks, and the results are consistent with our theoretical predictions.  
        Finally, we further analyze the current bad cases and find that the primary causes are angle misalignment and small vector norms, which can guide future improvements.

    \clearpage

    \bibliography{icml2026}
    \bibliographystyle{icml2026}

    \clearpage
    \appendix
    \section{Proof}\label{app:proof}
    \subsection{Supporting Lemma}
        We first present two lemmas about the Lipschitz constants of the attention mechanism and the MLP.

        \begin{lemma}
            \label{lem:attn-diff}
            \[
            \delta_{\rm attn}(t,x)
            =\alpha_{x\leftarrow t}\,W_O\,(v_t-v_x)
            =\alpha_{x\leftarrow t}\,W_O\,\Delta v.
            \]
        \end{lemma}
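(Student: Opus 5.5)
The plan is to compute both attention outputs at position $x$ in closed form directly from the definitions in the Preliminary section and subtract them. The argument rests on two facts: the row-wise softmax produces a probability vector, and the residual term $E_{\bullet(x)}=x$ is identical with and without the context.

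\emph{Step 1 (no context).} For the input $[x]$ consisting of the single column $x$, the score matrix is $1\times 1$, so $\softmax$ returns the weight $1$. Therefore $\mathrm{AttnOut}_x([x])=W_O v_x$ and
\[
\mathrm{Attn}_x([x]) = x + W_O v_x .
\]

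\emph{Step 2 (with context).} For $E=[t,x]$, the row of $A$ indexed by $x$ is $(\alpha_{x\leftarrow t},\alpha_{x\leftarrow x})$. Because it is a softmax row, it satisfies $\alpha_{x\leftarrow t}+\alpha_{x\leftarrow x}=1$. Hence
\[
\mathrm{Attn}_x([t,x]) = x + W_O\bigl(\alpha_{x\leftarrow t} v_t + \alpha_{x\leftarrow x} v_x\bigr).
\]

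\emph{Step 3 (subtract).} The residual $x$ cancels, and linearity of $W_O$ gives
\[
\delta_{\rm attn}(t,x)=W_O\bigl(\alpha_{x\leftarrow t} v_t + (\alpha_{x\leftarrow x}-1) v_x\bigr).
\]
Substituting $\alpha_{x\leftarrow x}-1=-\alpha_{x\leftarrow t}$ yields $\alpha_{x\leftarrow t}W_O(v_t-v_x)=\alpha_{x\leftarrow t}W_O\,\Delta v$, where $\Delta v:=v_t-v_x$.

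There is no genuine obstacle. The only point needing care is that the value vector $v_x=W_V x$ is the same object in both inputs. This holds because $q,k,v$ are computed token-wise from the raw embeddings, so adding $t$ changes only the attention weights and not $v_x$.
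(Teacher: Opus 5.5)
Your proposal is correct and follows the paper's proof essentially step for step: compute $\mathrm{AttnOut}_x$ with and without the context (where the softmax weight is $1$), observe that the residual term cancels, and subtract. You also state two facts that the paper uses without comment in its final equality: the softmax normalization $\alpha_{x\leftarrow t}+\alpha_{x\leftarrow x}=1$, and the fact that $v_x=W_V x$ is the same in both inputs.
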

        \begin{proof}[Proof of Lemma~\ref{lem:attn-diff}]
            The attention contribution at position $x$ is
            \[
            \mathrm{AttnOut}_x([t,x])=W_O\bigl(\alpha_{x\leftarrow t} v_t+\alpha_{x\leftarrow x} v_x\bigr).
            \]
            In the case without context knowledge, the row-wise softmax has a single weight equal to $1$, so
            \[
            \mathrm{AttnOut}_x([x])=W_O v_x.
            \]
            The attention layer includes a residual connection $+x$, which is the same under both inputs. 
            Hence the difference between the two outputs comes entirely from the attention term, which is
            \[
            \begin{split}
            &\mathrm{AttnOut}_x([t,x])-\mathrm{AttnOut}_x([x])\\
            &=W_O\bigl(\alpha_{x\leftarrow t} v_t+\alpha_{x\leftarrow x} v_x-v_x\bigr)\\
            &=\alpha_{x\leftarrow t}W_O(v_t-v_x).
            \end{split}
            \]
        \end{proof}
        
        \begin{lemma}
            \label{lem:mlp-lipschitz}
            For any $z,z'\in\R^d$,
            \[
            \|\mathrm{MLP}(z)-\mathrm{MLP}(z')\|
            \le L_{\mathrm{MLP}}\,\|z-z'\|,
            \]
            \[
            L_{\mathrm{MLP}}=1+\|W_2\|\,\|W_1\|.
            \]
        \end{lemma}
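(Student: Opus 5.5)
The plan is a direct triangle-inequality argument: split $\mathrm{MLP}(z)-\mathrm{MLP}(z')$ into its residual part and its nonlinear part, and bound each separately. By the definition of the MLP layer,
\[
\mathrm{MLP}(z)-\mathrm{MLP}(z') = (z-z') + W_2\bigl(\sigma(W_1 z)-\sigma(W_1 z')\bigr),
\]
so the triangle inequality gives
\[
\|\mathrm{MLP}(z)-\mathrm{MLP}(z')\|\le \|z-z'\| + \bigl\|W_2\bigl(\sigma(W_1 z)-\sigma(W_1 z')\bigr)\bigr\|.
\]

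For the second term, I will first use the definition of the operator norm induced by the Euclidean norm to pull out $W_2$:
\[
\bigl\|W_2\bigl(\sigma(W_1 z)-\sigma(W_1 z')\bigr)\bigr\|\le \|W_2\|\,\|\sigma(W_1 z)-\sigma(W_1 z')\|.
\]
Next, since $\sigma$ is assumed $1$-Lipschitz, this is at most $\|W_2\|\,\|W_1 z - W_1 z'\|$. Applying the operator norm once more gives at most $\|W_2\|\,\|W_1\|\,\|z-z'\|$.

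Combining the two bounds yields $\|\mathrm{MLP}(z)-\mathrm{MLP}(z')\|\le (1+\|W_2\|\,\|W_1\|)\,\|z-z'\| = L_{\mathrm{MLP}}\|z-z'\|$, which is the claim.

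There is no real obstacle. The one point to check is that the $1$-Lipschitz property of $\sigma$ is the assumption stated in the preliminaries, taken with respect to the Euclidean norm on $\R^d$. This is what lets the argument avoid any differentiability or Jacobian considerations, since the bound $\|J_\sigma\|\le 1$ is only claimed almost everywhere. The same constant also bounds $\|J_{\mathrm{MLP}}(z)\|$ almost everywhere, which is how the lemma will later be used inside the integral in Definition~\ref{def:context_information_vector}.
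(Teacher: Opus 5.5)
Your proof is correct and matches the paper's argument: the paper bounds the nonlinear part $W_2\sigma(W_1 z)$ by $\|W_2\|\,\|W_1\|\,\|z-z'\|$ using the operator norm and the $1$-Lipschitz property of $\sigma$, then adds the residual term via the triangle inequality. Your closing remark about how the lemma is used later is off but harmless: the paper's proof of Theorem~\ref{thm:decrease} bounds $\|J_{\mathrm{MLP}}\|$ directly from its Jacobian form rather than by invoking this lemma.
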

        \begin{proof}[Proof of Lemma~\ref{lem:mlp-lipschitz}]
            Let $g(z):=W_2\sigma(W_1 z)$. Since $\sigma$ is $1$-Lipschitz, we have
            \[
            \begin{split}
            \|g(z)-g(z')\| &\le \|W_2\|\,\|\sigma(W_1z)-\sigma(W_1z')\|\\
            &\le \|W_2\|\,\|W_1\|\,\|z-z'\|.
            \end{split}
            \]
            Therefore,
            \[
            \begin{split}
            &\|\mathrm{MLP}(z)-\mathrm{MLP}(z')\|\\
            &=\|z-z'+g(z)-g(z')\|\\
            &\le \|z-z'\|+\|g(z)-g(z')\|\\
            &\le (1+\|W_2\|\|W_1\|)\|z-z'\|.
            \end{split}
            \]
        \end{proof}

    \subsection{Proof of Theorem~\ref{thm:error-decomp}}
        \begin{proof}[Proof of Theorem~\ref{thm:error-decomp}]
            Let $z_0 = \mathrm{Attn}_x([x])$ and $z_1 = \mathrm{Attn}_x([t,x])$ denote the outputs of the attention layer when attending only to $x$ and when attending to both $t$ and $x$, respectively.
            By Lemma~\ref{lem:attn-diff}, the difference between these two inputs is
            \[
                z_1 - z_0 = \delta_{\rm attn}(t,x).
            \]
            We construct a linear path between $z_0$ and $z_1$, parameterized as $z(s) = z_0 + s(z_1 - z_0)$ for $s \in [0, 1]$, i.e.,
            \[
                z(s) = \mathrm{Attn}_x([x]) + s\,\delta_{\rm attn}(t,x).
            \]
            By the fundamental theorem of calculus, the difference of a vector-valued function along this path equals the integral of its derivative:
            \begin{align*}
                e(t,x) - e(x) &= \mathrm{MLP}(z_1) - \mathrm{MLP}(z_0) \\
                &= \int_{0}^{1} \frac{\mathrm{d}}{\mathrm{d}s} \mathrm{MLP}(z(s)) \, \mathrm{d}s.
            \end{align*}
            Applying the chain rule and using the Jacobian of the MLP, $J_{\mathrm{MLP}}(z) = \frac{\partial \mathrm{MLP}(z)}{\partial z}$, we obtain
            \[
            \begin{split}
                \frac{\mathrm{d}}{\mathrm{d}s} \mathrm{MLP}(z(s)) &= J_{\mathrm{MLP}}(z(s)) \cdot \frac{\mathrm{d} z(s)}{\mathrm{d}s}\\
                &= J_{\mathrm{MLP}}(z(s)) \cdot \delta_{\rm attn}(t,x).
            \end{split}
            \]
            Substituting this into the integral formula gives
            \[
            \begin{split}
                &e(t,x) - e(x) =\\
                &\int_{0}^{1} J_{\mathrm{MLP}}\bigl(\mathrm{Attn}_x([x])+s\,\delta_{\rm attn}(t,x)\bigr)\, \delta_{\rm attn}(t,x)\, \mathrm{d}s.
            \end{split}
            \]
            Comparing with the definition of $g(t,x)$ in Definition~\ref{def:context_information_vector}, we see that the right-hand side is exactly $g(t,x)$. Hence
            \[
                e(t,x) = e(x) + g(t,x).
            \]
        \end{proof}

    \subsection{Proof of Theorem~\ref{thm:decrease}}
        \begin{proof}[Proof of Theorem~\ref{thm:decrease}]
            Recall the contextual vector (Definition~\ref{def:context_information_vector})
            \[
            g(t,x)
            =\int_{0}^{1} 
            J_{\mathrm{MLP}}\!\bigl(\mathrm{Attn}_x([x])+s\,\delta_{\rm attn}(t,x)\bigr)\, \delta_{\rm attn}(t,x)\, \mathrm{d}s.
            \]
            
            \underline{\textit{Vector Norm.}}
                Using the submultiplicativity of the operator norm and the triangle inequality,
                \[
                \begin{split}
                &\|g(t,x)\| \le\\ 
                &\int_{0}^{1} \Bigl\|J_{\mathrm{MLP}}\!\bigl(\mathrm{Attn}_x([x])+s\,\delta_{\rm attn}(t,x)\bigr)\Bigr\|\,\|\delta_{\rm attn}(t,x)\|\ \mathrm{d}s.
                \end{split}
                \]
                By the Jacobian form of the MLP,
                \[
                J_{\mathrm{MLP}}(z):=\Id+W_2\,J_\sigma(W_1 z)\,W_1,
                \qquad \|J_\sigma(\cdot)\|\le 1 \ \text{a.e.},
                \]
                we have, for almost every $z$,
                \[
                \begin{split}
                    \|J_{\mathrm{MLP}}(z)\| &\le \|\Id\|+\|W_2\|\,\|J_\sigma(W_1z)\|\,\|W_1\| \\
                    &\le 1+\|W_2\|\,\|W_1\|\\
                    &= L_{\mathrm{MLP}}.
                \end{split}
                \]
                Therefore,
                \[
                \begin{split}
                    \|g(t,x)\| &\le \int_0^1 L_{\mathrm{MLP}}\ \|\delta_{\rm attn}(t,x)\|\ \mathrm{d}s\\
                    &= L_{\mathrm{MLP}}\,\|\delta_{\rm attn}(t,x)\|.
                \end{split}
                \]
                Finally, by Lemma~\ref{lem:attn-diff}, $\delta_{\rm attn}(t,x)=\alpha_{x\leftarrow t}\,W_O\,(v_t-v_x)$, hence
                \[
                \|\delta_{\rm attn}(t,x)\|
                \le \alpha_{x\leftarrow t}\,\|W_O\|\,\|v_t-v_x\|,
                \]
                which yields
                \[
                \begin{split}
                    \|g(t,x)\| &\le L_{\mathrm{MLP}}\,\|\delta_{\rm attn}(t,x)\|\\
                    &\le L_{\mathrm{MLP}}\,\alpha_{x\leftarrow t}\,\|W_O\|\,\|v_t-v_x\|.
                \end{split}
                \]
            
            \underline{\textit{Vector Direction.}}
                By Theorem~\ref{thm:error-decomp}, $e(t,x)=e(x)+g(t,x)$. If $\|e(t,x)\| < \|e(x)\|$, then
                \[
                \|e(x)+g(t,x)\|^2 < \|e(x)\|^2.
                \]
                Expanding the left-hand side gives
                \[
                \|e(x)\|^2 + 2\langle e(x), g(t,x)\rangle + \|g(t,x)\|^2 < \|e(x)\|^2,
                \]
                so
                \[
                \begin{split}
                    &2\langle e(x), g(t,x)\rangle + \|g(t,x)\|^2 < 0\Longleftrightarrow \\
                    &\langle -e(x), g(t,x)\rangle > \frac{\|g(t,x)\|^2}{2}.
                \end{split}
                \]
                Dividing both sides by $\|e(x)\|\,\|g(t,x)\|$ yields
                \[
                \frac{\langle -e(x), g(t,x)\rangle}{\|e(x)\|\,\|g(t,x)\|}
                >
                \frac{\|g(t,x)\|}{2\|e(x)\|}.
                \]
                Recalling the definition of
                \[
                \rho(t,x):=\frac{\langle -e(x), g(t,x)\rangle}{\|e(x)\|\,\|g(t,x)\|}\in[-1,1],
                \]
                we obtain the desired necessary condition
                \[
                \rho(t,x)>\frac{\|g(t,x)\|}{2\,\|e(x)\|}.
                \]
        \end{proof}

    \subsection{Proof of Corollary~\ref{cor:decrease_multi_info}}
        \begin{proof}[Proof of Corollary~\ref{cor:decrease_multi_info}]
            Let $\alpha_{x\leftarrow t_{n+1}}\ :=\ A_{x\,t_{n+1}}$. Analogously to Lemma~\ref{lem:attn-diff}, the increment in the attention output at position $x$ is
            \[
            \begin{split}
                &\delta_{\rm attn}(t_{n+1} \mid T, x)\\
                &:=\mathrm{AttnOut}_x([T\cup\{t_{n+1}\},x])-\mathrm{AttnOut}_x([T,x])\\
                &= W_O\,\alpha_{x\leftarrow t_{n+1}}\bigl(v_{t_{n+1}}-\bar v_T\bigr).
            \end{split}
            \]
            
            Following Definition~\ref{def:context_information_vector}, we define the incremental information vector contributed by the new token as
            \[
            \begin{split}
                &g(t_{n+1}\mid T,x)\ :=\\
                &\int_0^1
                J_{\mathrm{MLP}}\!\bigl(\mathrm{Attn}_x([T,x]) + s\,\delta_{\rm attn}(t_{n+1} \mid T, x)\bigr)\, \\
                &\qquad\qquad\qquad\qquad\qquad \delta_{\rm attn}(t_{n+1} \mid T, x)\ \mathrm ds.
            \end{split}
            \]
            Then we have $e(T\cup\{t_{n+1}\}, x) \ =\ e(T, x)\ +\ g(t_{n+1}\mid T,x)$. By following the same proof strategy as in Theorem~\ref{thm:error-decomp} and Theorem~\ref{thm:decrease}, we obtain the corresponding statements in Theorem~\ref{thm:decrease}.
        \end{proof}

    \subsection{Proof of Corollary~\ref{cor:decrease_multi_layer_transformer}}
        \begin{proof}[Proof of Corollary~\ref{cor:decrease_multi_layer_transformer}]
            We prove the stated norm bound and direction condition for the overall vector
            \[
                g^{(L)}(t,x):=\sum_{\ell=1}^L \tilde g^{(\ell)}(t,x),
            \]
            \[
                e^{(L)}(t,x)=e^{(L)}(x)+g^{(L)}(t,x).
            \]
            
            \underline{\textit{Vector Norm.}}
                Fix a layer $\ell$. By definition,
                \[
                \begin{split}
                    &g^{(\ell)}(t,x)=\\
                    &\int_0^1 J_{\mathrm{MLP}^{(\ell)}}\!\bigl(\mathrm{Attn}^{(\ell)}_x([x]) + s\,\delta^{(\ell)}_{\rm attn}(t,x)\bigr)\, \delta^{(\ell)}_{\rm attn}(t,x)\,\mathrm ds.
                \end{split}
                \]
                As in the single-layer case, using $\|J_{\mathrm{MLP}^{(\ell)}}(\cdot)\|\le L_{\mathrm{MLP}}^{(\ell)}$ (a.e.) gives
                \[
                \|g^{(\ell)}(t,x)\|
                \le L_{\mathrm{MLP}}^{(\ell)}\,\|\delta^{(\ell)}_{\rm attn}(t,x)\|.
                \]
                Moreover, from
                \[
                \delta^{(\ell)}_{\rm attn}(t,x)
                =\alpha^{(\ell)}_{x\leftarrow t}\,W_O^{(\ell)}\,\Delta v^{(\ell)},
                \]
                \[
                \Delta v^{(\ell)}:=v_t^{(\ell)}-v_x^{(\ell)},
                \]
                we have
                \[
                \|\delta^{(\ell)}_{\rm attn}(t,x)\|
                \le
                \alpha^{(\ell)}_{x\leftarrow t}\,
                \|W_O^{(\ell)}\|\,
                \|v_t^{(\ell)}-v_x^{(\ell)}\|.
                \]
                Hence
                \[
                \|g^{(\ell)}(t,x)\|
                \le
                L_{\mathrm{MLP}}^{(\ell)}\,
                \alpha^{(\ell)}_{x\leftarrow t}\,
                \|W_O^{(\ell)}\|\,
                \|v_t^{(\ell)}-v_x^{(\ell)}\|.
                \]
                
                Next, consider the propagation through the upper layers. Recall
                \[
                \tilde g^{(\ell)}(t,x)
                =\int_0^1
                J_{\Phi^{(\ell+1:L)}}\!\bigl(h^{(\ell)}_x(x)+s\,g^{(\ell)}(t,x)\bigr)\,
                g^{(\ell)}(t,x)\,\mathrm ds,
                \]
                where $\Phi^{(\ell+1:L)}=B^{(L)}_x\circ\cdots\circ B^{(\ell+1)}_x$. If each residual block $B_x^{(j)}$ is
                $L_{\rm blk}^{(j)}$-Lipschitz, then the composition $\Phi^{(\ell+1:L)}$ is
                \[
                \Bigl(\prod_{j=\ell+1}^L L_{\rm blk}^{(j)}\Bigr)\text{-Lipschitz}.
                \]
                Consequently, the Jacobian norm is bounded almost everywhere by the same constant, i.e.,
                \[
                \bigl\|J_{\Phi^{(\ell+1:L)}}(z)\bigr\|
                \le \prod_{j=\ell+1}^L L_{\rm blk}^{(j)}
                \quad\text{for a.e. } z.
                \]
                Therefore,
                \[
                \begin{split}
                    \|\tilde g^{(\ell)}(t,x)\| &\le \int_0^1
                    \Bigl(\prod_{j=\ell+1}^L L_{\rm blk}^{(j)}\Bigr)\,
                    \|g^{(\ell)}(t,x)\|\ \mathrm ds \\
                    & = \Bigl(\prod_{j=\ell+1}^L L_{\rm blk}^{(j)}\Bigr)\,
                    \|g^{(\ell)}(t,x)\|.
                \end{split}
                \]
                Combining the above bounds and summing over $\ell$ yields
                \begin{align*}
                &\|g^{(L)}(t,x)\|\\
                &= \Bigl\|\sum_{\ell=1}^L \tilde g^{(\ell)}(t,x)\Bigr\|
                \le \sum_{\ell=1}^L \|\tilde g^{(\ell)}(t,x)\|\\
                &\le
                \sum_{\ell=1}^L
                \Bigl(\prod_{j=\ell+1}^L L_{\rm blk}^{(j)}\Bigr)\,
                L_{\mathrm{MLP}}^{(\ell)}\,
                \alpha^{(\ell)}_{x\leftarrow t}\,
                \|W_O^{(\ell)}\|\,
                \|v^{(\ell)}_t - v^{(\ell)}_x\|,
                \end{align*}
                which is exactly the stated bound.
            
            \underline{\textit{Vector Direction.}}
                Using the error decomposition $e^{(L)}(t,x)=e^{(L)}(x)+g^{(L)}(t,x)$, similar with the proof of Theorem~\ref{thm:decrease}, we can derive that
                \[
                \langle -e^{(L)}(x),\, g^{(L)}(t,x)\rangle
                >
                \frac{\|g^{(L)}(t,x)\|^2}{2}.
                \]
                If $\|g^{(L)}(t,x)\|>0$, dividing both sides by $\|e^{(L)}(x)\|\,\|g^{(L)}(t,x)\|$ yields
                \[
                \rho^{(L)}(t,x)
                :=
                \frac{\langle -e^{(L)}(x),\, g^{(L)}(t,x)\rangle}
                {\|e^{(L)}(x)\|\,\|g^{(L)}(t,x)\|}
                >
                \frac{\|g^{(L)}(t,x)\|}{2\,\|e^{(L)}(x)\|},
                \]
                which proves the stated condition.
        \end{proof}

\section{Additional Information}
    \subsection{Experimental Datasets}\label{app:dataset}
        \paragraph{MATH}
            MATH is a competition-style mathematics benchmark designed to measure advanced symbolic and multi-step problem solving. It contains 12.5k problems drawn from high-school math contests and textbooks, spanning multiple subjects (e.g., algebra, geometry, number theory, counting \& probability, etc.) and difficulty levels. Each example provides a problem statement together with a full, human-written step-by-step solution and a final boxed answer, which supports evaluating both final-answer correctness and intermediate reasoning. In LLM evaluation, MATH is typically scored by exact match on the final answer (often after normalization).
            We employ the MATH500 \cite{lightman2024lets} version for our experiments. 
        
        \paragraph{CHAMP}
            CHAMP (Concept and Hint-Annotated Math Problems) is a competition-level math dataset built for fine-grained analysis of how models use contextual information. Beyond the original problem and gold solution/answer, each instance is annotated with (1) concepts—general mathematical facts or principles that may be relevant—and (2) hints—problem-specific tricks or guidance that can materially help solve the problem. This structure enables controlled experiments such as adding helpful hints, injecting misleading concepts, or testing whether models can benefit from the “right” side information. CHAMP is also used to study solution faithfulness and verification, not just final-answer accuracy.
        
        \paragraph{TheoremQA}
            TheoremQA is a theorem-driven question answering benchmark that tests whether models can apply domain knowledge (i.e., specific theorems) to solve challenging problems. It consists of expert-curated questions covering hundreds of theorems across multiple disciplines, including mathematics, physics, EE/CS, and finance. Problems are designed so that solving them requires identifying and correctly using an appropriate theorem rather than relying on surface patterns. The dataset emphasizes reasoning and knowledge application: models must connect the question to relevant formal principles and carry out multi-step derivations to reach the final answer. Evaluation commonly focuses on final-answer accuracy (often exact match), with optional analysis of reasoning quality.
        
        \paragraph{MMLU-Redux}
            MMLU-Redux is a cleaned, manually re-annotated variant of the widely used MMLU benchmark, aiming to correct ambiguities and label errors that can distort model comparisons. It preserves the broad subject coverage of MMLU (spanning dozens of academic and professional domains) while providing revised gold answers and improved quality control. The benchmark is intended for more reliable measurement of factual recall and reasoning in a multiple-choice setting, and it is particularly useful for checking whether model rankings are robust to dataset noise. In practice, evaluation remains standard multiple-choice accuracy, but conclusions are less sensitive to flawed items.
        
        \paragraph{GPQA}
            GPQA is a graduate-level, “Google-proof” multiple-choice QA benchmark written by domain experts to be exceptionally difficult even with unrestricted web access. It focuses on deep scientific reasoning in three core areas: biology, chemistry, and physics. The questions are designed to resist shallow retrieval and require substantial conceptual understanding and careful elimination of distractors. Because humans without specialized expertise struggle on the dataset, GPQA is often used to stress-test frontier models and to study oversight and evaluation reliability on hard scientific questions. Performance is typically reported as multiple-choice accuracy, sometimes with analysis by domain and difficulty.
        
        \paragraph{NaturalQuestions}
            NaturalQuestions (NQ) is a large-scale QA dataset built from real, anonymized Google search queries paired with Wikipedia pages from top search results. For each question-page pair, human annotators label a long answer (usually a paragraph or section) and a short answer (one or more spans/entities), or mark the question as unanswerable from the page. This design supports both reading comprehension and retrieval-style evaluation, since the system must identify evidence within long documents. NQ includes diverse information-seeking questions and naturally occurring ambiguity, making it a strong benchmark for open-domain QA pipelines. Common metrics include exact match and token-level F1 for short answers, plus long-answer accuracy.
        
        \paragraph{FinQA}
            FinQA is a financial question answering dataset targeting numerical reasoning over real corporate reports. Each instance pairs a question with evidence drawn from heterogeneous sources such as tables and accompanying textual context. Solving typically requires multi-step arithmetic, aggregation, and logical operations (e.g., computing growth rates, margins, or differences across periods). A key feature of FinQA is the availability of gold “reasoning programs” (structured operation sequences) that make the computation process explicit, enabling explainable evaluation and program-based training or analysis. Systems are commonly evaluated by final numeric answer accuracy (often exact match after normalization), and sometimes by program correctness or execution validity.
    
    \subsection{Experimental Setup}\label{app:experiment_setup}
        \label{tab:prompt}
        \begin{table}[t]
            \centering
            \small
            \caption{
                The prompt of inference during our experiments.
                \{information\} can be replaced with demonstrations, retrieved knowledge, and memory.
            }
            \begin{tabularx}{\linewidth}{@{}>{\raggedright\arraybackslash}X@{}}
    \toprule
    Prompt of Inference\\
    \midrule
    \texttt{\{task\_definition\}}\newline
    \newline
    \texttt{---}\newline
    \newline
    \texttt{Here are some information to help you solve the given task:}\newline
    \texttt{```}\newline
    \texttt{\{information\}}\newline
    \texttt{```}\newline
    \newline
    \texttt{---}\newline
    \newline
    \texttt{Based on the above information, solve the following question:}\newline
    \texttt{\{question\}}\\
    \bottomrule
\end{tabularx}

        \end{table}
        \begin{table*}[t]
            \centering
            \small
            \caption{
                The prompt of Thinker during memory evolution.
            }
            \begin{tabularx}{\linewidth}{@{}>{\raggedright\arraybackslash}X@{}}
    \toprule
    Prompt of Thinker using Memory Evolution\\
    \midrule
    \texttt{You are an expert meta-reasoning coach for AI agents.}\newline
    \newline
    \texttt{You will receive a SINGLE data sample from an agent’s past experience.}\newline
    \texttt{This sample may include: a task description, the agent’s thoughts/actions/observations,}\newline
    \texttt{and whether the attempt succeeded or failed.}\newline
    \newline
    \texttt{Your job is to distill this ONE example into ONE HIGH-LEVEL INSIGHT:}\newline
    \texttt{a general rule or strategy that would be useful for similar future tasks.}\newline
    \newline
    \texttt{================ INPUT SAMPLE ================}\newline
    \texttt{[Task description]}\newline
    \texttt{\{TASK\_DESCRIPTION\}}\newline
    \newline
    \texttt{[Episode log / trajectory]}\newline
    \texttt{\{EPISODE\_LOG\}}\newline
    \texttt{=============================================}\newline
    \newline
    \texttt{Please now produce exactly ONE INSIGHT that:}\newline
    \newline
    \texttt{- Is HIGH-LEVEL and GENERAL.}\newline
    \texttt{  - It should be a reusable principle that could help on future, similar tasks.}\newline
    \texttt{  - It should focus on better thinking and acting (e.g., how to search, plan, verify, explore).}\newline
    \newline
    \texttt{- Is NOT tied to superficial details of this specific sample.}\newline
    \texttt{  - Do NOT mention specific names, URLs, IDs, or exact numbers from the log}\newline
    \texttt{    unless they are essential to the principle.}\newline
    \texttt{  - Do NOT describe the episode step-by-step.}\newline
    \texttt{  - Do NOT say “In this trial…” or “In this example…”.}\newline
    \texttt{    Instead, phrase it as “When solving tasks like this, you should…”.}\newline
    \newline
    \texttt{- Captures what the agent should DO DIFFERENTLY or KEEP DOING.}\newline
    \texttt{  - If the attempt failed: focus on what change of strategy would most likely prevent this failure.}\newline
    \texttt{  - If the attempt succeeded: focus on what strategy made it robust and should be repeated.}\newline
    \newline
    \texttt{Output format (VERY IMPORTANT):}\newline
    \newline
    \texttt{INSIGHT:}\newline
    \texttt{- <one single sentence (or two short sentences) stating the general rule>}\newline
    \newline
    \texttt{Do NOT output anything else. No explanations, no justification, no bullet list of multiple rules.}\newline
    \texttt{Just the single “INSIGHT:” line as specified above.}\\
    \bottomrule
\end{tabularx}

            \label{tab:prompt_me}
        \end{table*}
    
        In this paper, for all three types of contextual information, we use BM25 to retrieve context relevant to the user query.
        We do not adopt more advanced retrieval methods because our primary goal is to analyze the relationship between the context error and various factors, rather than to optimize performance.
        To reduce experimental overhead, we employ the relatively simple BM25 method in our analytical experiments.

        The inference prompt we used is shown in Table~\ref{tab:prompt}.
        Specifically, our method designs for different types of information are as follows:
        \paragraph{In-Context Learning}
            We use a sample ICL approach, where the input consists of the task definition, demonstrations, and the user query, and the output is the answer to the query.
            The demonstration pool is the training set of each dataset.
        \paragraph{Retrieval-Augmented Generation}
            We directly use the retrieval corpus provided by each dataset and simply concatenate the retrieved results into the input.
        \paragraph{Memory Evolution}
            Following \citet{zhao2024expel}, we implement evolution with a simple agent system consisting of two components: a Thinker and a Reasoner.
            The Thinker learns from historical interactions and summarizes experience, while the Reasoner answers user queries based on the summarized experience.
            The Thinker prompt we used is shown in Table~\ref{tab:prompt_me}.

    \subsection{Calculation of Main Parameters}\label{app:calculation}
        In this section, we introduce how to compute each analysis parameter in \S\ref{sec:experiment}.
        To reduce the experimental cost, we calculate the following parameters with the first layer of the model, as we consider the first layer as one single Transformer block.
        \textbf{\textit{(i) $e(x)$ and $e(t,x)$}}:
            We directly compute the error as the difference between the pooling of the encoding vectors of each output token and the corresponding token in the ground-truth answer, under the inputs $x$ and $[t,x]$, respectively.
        \textbf{\textit{(ii) $g(t,x)$}}:
            Our computation follows Definition~\ref{def:context_information_vector}, where $\mathrm{Attn}_x([x])$ and $\delta_{\mathrm{attn}}(t,x)$ can be obtained directly from the model parameters.
            For $J_{\sigma}$ in $J_{\mathrm{MLP}}$, we derive the Jacobian matrices of the activation functions for each LLM accordingly.
        \textbf{\textit{(iii) $\rho(t,x)$}}:
            Based on the definition in Theorem~\ref{thm:decrease}, we can compute it directly from $e(x)$ and $g(t,x)$.
        \textbf{\textit{(iv) $\alpha_{t \leftarrow x}$}}:
            We directly extract the attention weights from $x$ to $t$ when feeding $[t,x]$ into the model.

\section{Additional Discussion}
    \subsection{Analysis with Multiple Tokens}\label{app:multi_token}
        Let $T=(t_1,\ldots,t_n)$ be the context sequence and $X=(x_1,\ldots,x_m)$ be the query sequence.
        We focus on a fixed position $p$ in $X$ whose representation is used to form the output.
        For a single-layer Transformer, denote the attention output at position $p$ by $\mathrm{Attn}_p(\cdot)$ and define
        \[
        \delta_{\mathrm{attn},p}(T,X)
        := \mathrm{Attn}_p([T,X])-\mathrm{Attn}_p([X]) \in \mathbb{R}^d .
        \]
        Define the contextual correction vector at position $p$ as
        \[
        \begin{split}
            &g_p(T,X)
            :=\\ 
            &\int_0^1
            J_{\mathrm{MLP}}\!\Big(\mathrm{Attn}_p([X]) + s\,\delta_{\mathrm{attn},p}(T,X)\Big)\,
            \delta_{\mathrm{attn},p}(T,X)\, \mathrm ds .
        \end{split}
        \]
        By the fundamental theorem of calculus applied to the mapping $\mathrm{MLP}(\cdot)$ along the line segment
        $\mathrm{Attn}_p([X]) \mapsto \mathrm{Attn}_p([T,X])$, we have
        \[
        y_p([T,X]) - y_p([X]) = g_p(T,X),
        \]
        which implies the same error decomposition as Theorem~\ref{thm:error-decomp} at position $p$
        \[
        e_p([T,X]) = e_p([X]) + g_p(T,X).
        \]
        If we define the sequence-level error by concatenation
        $e([X]) := [e_1([X]);\ldots;e_m([X])] \in \mathbb{R}^{md}$ and similarly
        $g(T,X) := [g_1(T,X);\ldots;g_m(T,X)]$, then the decomposition also holds at the sequence level
        \[
        e([T,X]) = e([X]) + g(T,X).
        \]
        Therefore, all geometric implications remain unchanged.
        In particular, the vector direction defined by the cosine
        \[
        \rho_p(T,X)
        := \frac{\langle -e_p([X]),\, g_p(T,X)\rangle}{\|e_p([X])\|\,\|g_p(T,X)\|}
        \in [-1,1]
        \]
        yields the same necessary condition for error reduction
        \[
        \|e_p([T,X])\| < \|e_p([X])\|
        \Rightarrow
        \rho_p(T,X) > \frac{\|g_p(T,X)\|}{2\|e_p([X])\|}.
        \]

    \subsection{Effect of Other Structure of Transformer}\label{app:other_structure_of_transformer}
        In this section, we analyze how different factors not discussed in \S\ref{sec:analysis} affect the conclusion of this paper.
        \paragraph{Multi-Head Attention.}
            Multi head attention keeps the same decomposition view because the model still produces an attention output at position $x$, and the contextual vector is simply the composition of the per head contributions through the output projection. \textbf{Overall, MHA does not change our main conclusion.}
        \paragraph{Attention Variants.}
            Sharing keys and values across heads mainly redistributes where relevance and complementarity live across heads, while the contextual vector is still governed by how much attention is assigned to the context and how informative the context values are in the projected space. \textbf{In short, these variants preserve the key conclusion.}
        \paragraph{Feed Forward Network Variants.}
            Gating and alternative nonlinearities may alter the local amplification or attenuation of an update, but error reduction still hinges on whether the induced vector aligns with the negative original error and whether its magnitude is not excessively large. \textbf{Thus, the main conclusion remains unchanged.}
        \paragraph{Positional Encoding and Attention Bias.}
            These mechanisms mostly reshape attention weights and therefore affect relevance, yet the same relationship between vector direction, vector norm, and the possibility of error reduction continues to apply. \textbf{Consequently, our central conclusion still holds.}
        \paragraph{Masking and Long Context Visibility Constraints.}
            Visibility constraints decide whether contextual tokens can contribute at all, but whenever they are visible their impact on the error can still be interpreted through the same vector based lens. \textbf{So they do not overturn the main conclusion either.}

    \subsection{Case Study}\label{app:case_study}
        \begin{table*}[t]
            \centering
            \small
            \caption{
                The case study of different relations between context vector and error.
                Cases are from MATH dataset using Llama-3.1-8B with ICL.
                Considering the space, we have removed information that is irrelevant to the reasoning process.
            }
            \begin{tabular}{>{\raggedright\arraybackslash}m{0.13\textwidth}
                >{\raggedright\arraybackslash}m{0.20\textwidth}
                >{\raggedright\arraybackslash}m{0.20\textwidth}
                >{\raggedright\arraybackslash}m{0.36\textwidth}}
    \toprule
    Type & Problem & Demonstration & Reasoning process \\
    \midrule
    \textgreen{Error Reduction} &
    Solve \(4^x = 8\) and express \(x\) as a common fraction. &
    Solve \(2^8 = 4^x\) for \(x\). &
    I’ll mirror the demonstration’s strategy: rewrite both sides using the same base so I can compare exponents directly. Since \(4 = 2^2\) and \(8 = 2^3\), we get \(4^x = (2^2)^x = 2^{2x}\). Then \(2^{2x} = 2^3\), so the exponents must match: \(2x = 3\). Therefore \(x = \frac{3}{2}\). Final answer: \(\frac{3}{2}\). \\
    \midrule
    \textred{Mismatch Angle} &
    You have 5 shirts, 6 pairs of pants, and 8 hats. How many outfits are possible if you choose one of each. &
    Solve \(2^3 \cdot 3^x = 72\) for \(x\). &
    The demonstration looks like a template: rewrite numbers in prime powers and solve by matching exponents. I follow that template and start solving the demonstration-style equation \(2^3 \cdot 3^x = 72\): factor \(72 = 2^3 \cdot 3^2\), so \(x = 2\). \textred{I got pulled into the exponent-matching task from the demonstration and effectively answered the demonstration again, even though the question is actually about counting outfit combinations.} Final answer: 2. \\
    \midrule
    \textred{Overfit Norm} &
    The sum of the first \(N\) positive odd integers is 121. Find \(N\). &
    Find the smallest positive perfect cube that can be written as the sum of three consecutive integers. &
    In the demonstration, the move is to model a sum using three consecutive integers: \((n-1)+n+(n+1)=3n\). I reuse that pattern here and try to force \(121\) into the same shape: \(121 \approx 3n\). Since \(121/3\) isn’t an integer, I “fix” it by switching to the nearest multiple of 3: \(120 = 39+40+41\), so I take \(n=40\). \textred{I over-focus on the demonstration’s ``three consecutive integers'' structure and ignore the key phrase ``first \(N\) odd integers,'' which should drive a different solution.} Final answer: 40. \\
    \bottomrule
\end{tabular}

            \label{tab:case_study}
        \end{table*}

        We present a case study of our analysis in Table~\ref{tab:case_study}.

\section{Additional Experiment}
    \subsection{Analysis of Multi Information and Multi Layers}
        \label{app:vector_change_cross_multi_layer_context}

        \begin{figure}[t]
            \centering
            \small
            \includegraphics[width=\linewidth]{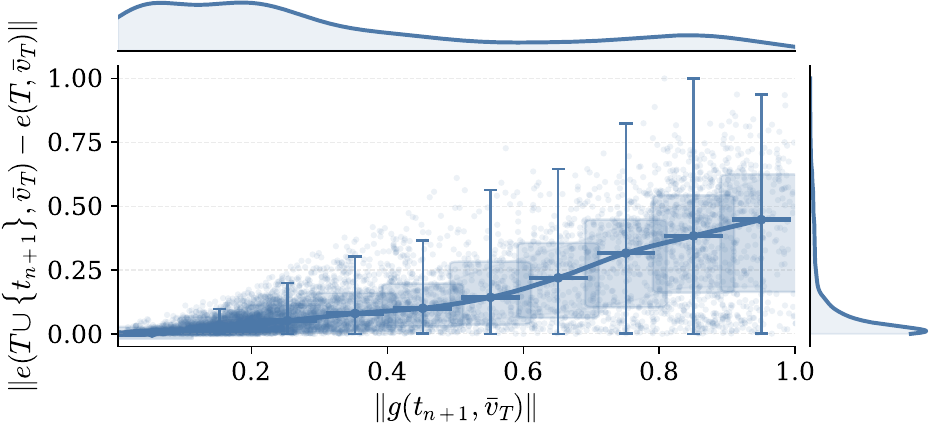}
            \caption{
                Relationship between error change with new context $\|e(T \cup \{t_{n+1}\},x)-e(T, x)\|$ and vector norm $\|g(t_{n+1} \mid T, x)\|$. Each point corresponds to one data sample.
                The curves at the top and on the right show the marginal distributions of the data points along the $x$- and $y$-axes, respectively.
                The fitted Pearson correlation coefficient of the scatter points is $0.751$.
            }
            \label{fig:2d_line_g_norm_multi_info}
        \end{figure}
        
        \begin{figure}[t]
            \centering
            \small
            \includegraphics[width=\linewidth]{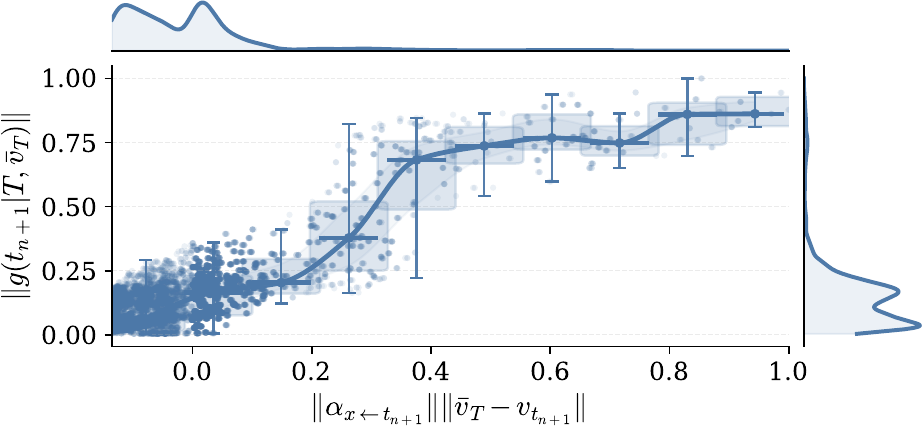}
            \caption{
                Relationship between the vector norm $\|g(t_{n+1} \mid T, x)\|$ and $\|\alpha_{x \leftarrow t_{n+1}}\|\|v_{t_{n+1}}-\bar{v}_T\|$. 
                Each point corresponds to one data sample.
                The curves at the top and on the right show the marginal distributions of the data points along the $x$- and $y$-axes, respectively.
                The fitted Pearson correlation coefficient of the scatter points is $0.838$.
            }
            \label{fig:2d_line_g_bound_multi_info}
        \end{figure}
        
        \begin{figure}[t]
            \centering
            \small
            \includegraphics[width=\linewidth]{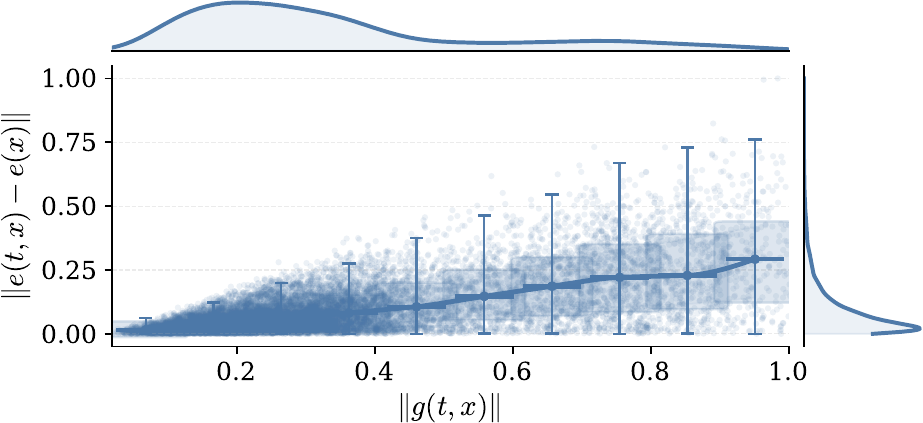}
            \caption{
                Relationship between multi-layer error change $\|e^{(L)}(t,x)-e^{(L)}(x)\|$ and the vector norm $\|g^{(L)}(t, x)\|$. Each point corresponds to one data sample.
                The curves at the top and on the right show the marginal distributions of the data points along the $x$- and $y$-axes, respectively.
                The fitted Pearson correlation coefficient of the scatter points is $0.655$.
            }
            \label{fig:2d_line_g_norm_multi_layer}
        \end{figure}
        
        \begin{figure}[t]
            \centering
            \small
            \includegraphics[width=\linewidth]{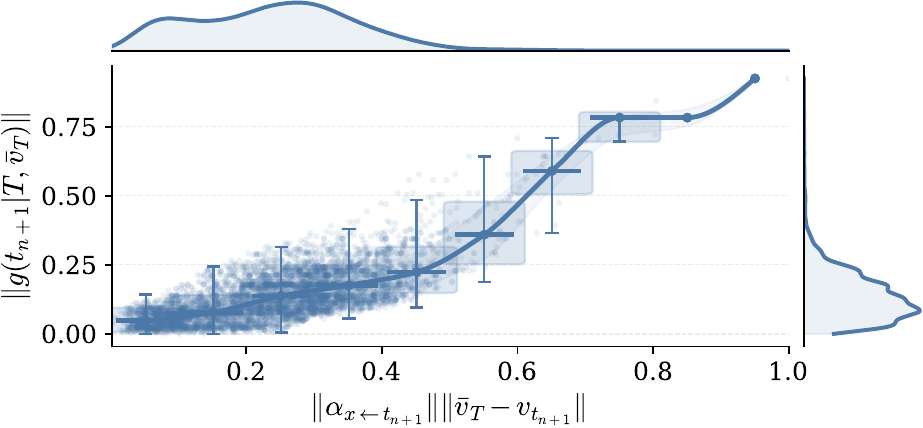}
            \caption{
                Relationship between the vector norm $\|g^{(L)}(t, x)\|$ and $\|\alpha^{(L)}_{x \leftarrow t}\|\|t-x\|$. Each point corresponds to one data sample.
                The curves at the top and on the right show the marginal distributions of the data points along the $x$- and $y$-axes, respectively.
                The fitted Pearson correlation coefficient of the scatter points is $0.717$.
            }
            \label{fig:2d_line_g_bound_multi_layer}
        \end{figure}
        
        In this part, we present how the error change varies with the contextual vector in the multi-context and multi-layer setting, and how the vector varies with the relevance and complementarity of the context, as illustrated in Figures~\ref{fig:2d_line_g_norm_multi_info} to \ref{fig:2d_line_g_bound_multi_layer}.
        We can observe that the empirical trends are consistent with the predictions in Corollary~\ref{cor:decrease_multi_info} and Corollary~\ref{cor:decrease_multi_layer_transformer}, which validate our theoretical derivations.

    \subsection{Output Error with Contextual Information cross Different Methods}\label{app:error_cross_method}
        \begin{figure}[t]
            \centering
            \small
            \includegraphics[width=\linewidth]{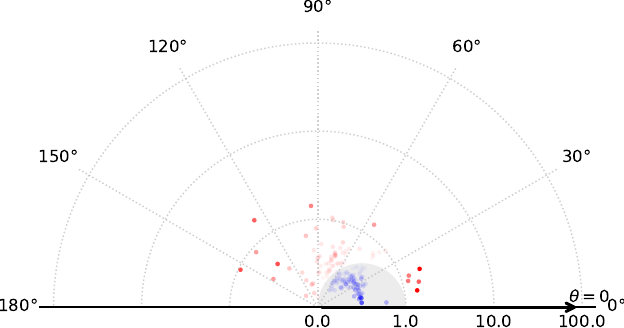}
            \caption{
                The variation of $\|e(t,x)\| - \|e(x)\|$ with respect to $g(t,x)$ of ICL, where each point represents one data instance.
            }
            \label{fig:2d_polar_icl}
        \end{figure}
        \begin{figure}[t]
            \centering
            \small
            \includegraphics[width=\linewidth]{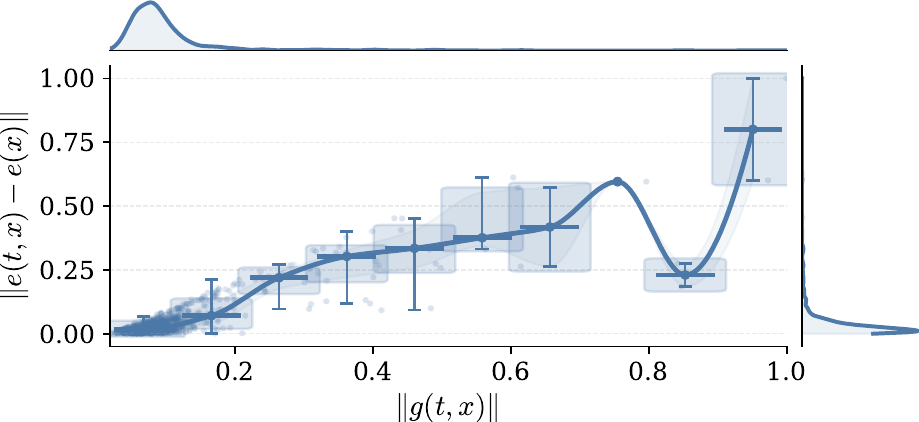}
            \caption{
                The variation of $\|e(t,x)-e(x)\|$ with respect to $\|g(t,x)\|$ of ICL, where each point represents one data instance.
                The Pearson correlation coefficient for the fitted points is $0.876$.
            }
            \label{fig:2d_line_g_norm_icl}
        \end{figure}
        \begin{figure}[t]
            \centering
            \small
            \includegraphics[width=\linewidth]{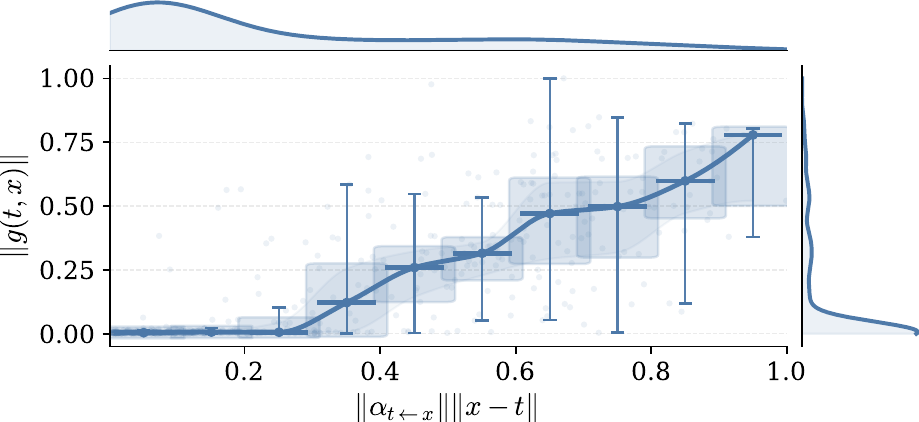}
            \captionof{figure}{
                The variation of $\|g(t,x)\|$ with respect to $\alpha_{x \leftarrow t}\|v_t-v_x\|$ of ICL, where each point represents one data instance.
                The Pearson correlation coefficient for the fitted points is $0.817$.
            }
            \label{fig:2d_line_g_bound_icl}
        \end{figure}
            \begin{figure}[t]
            \centering
            \small
            \includegraphics[width=\linewidth]{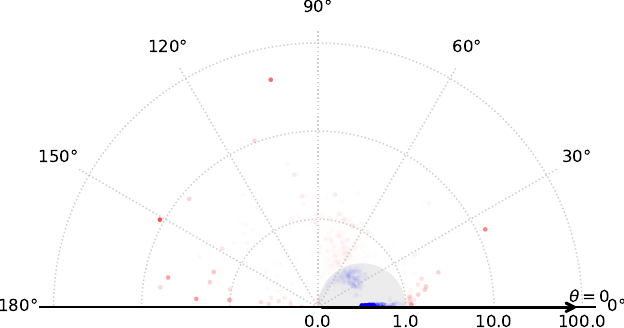}
            \caption{
                The variation of $\|e(t,x)\| - \|e(x)\|$ with respect to $g(t,x)$ of RAG, where each point represents one data instance.
            }
            \label{fig:2d_polar_rag}
        \end{figure}
        \begin{figure}[t]
            \centering
            \small
            \includegraphics[width=\linewidth]{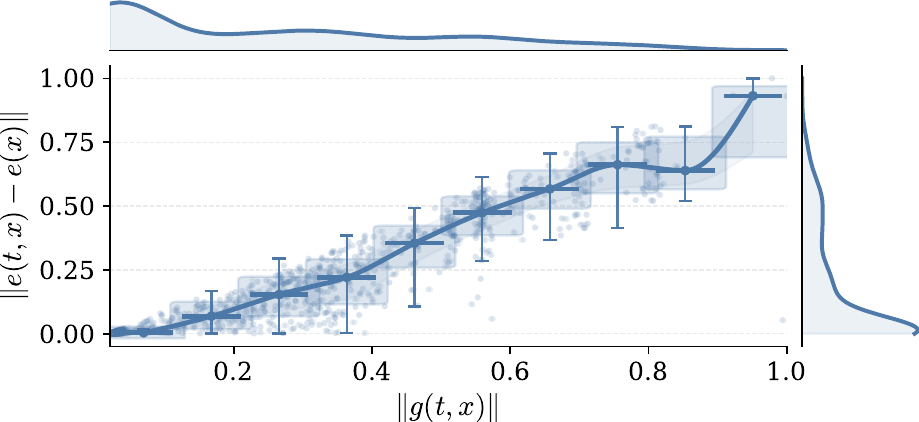}
            \caption{
                The variation of $\|e(t,x)-e(x)\|$ with respect to $\|g(t,x)\|$ of RAG, where each point represents one data instance.
                The Pearson correlation coefficient for the fitted points is $0.903$.
            }
            \label{fig:2d_line_g_norm_rag}
        \end{figure}
        \begin{figure}[t]
            \centering
            \small
            \includegraphics[width=\linewidth]{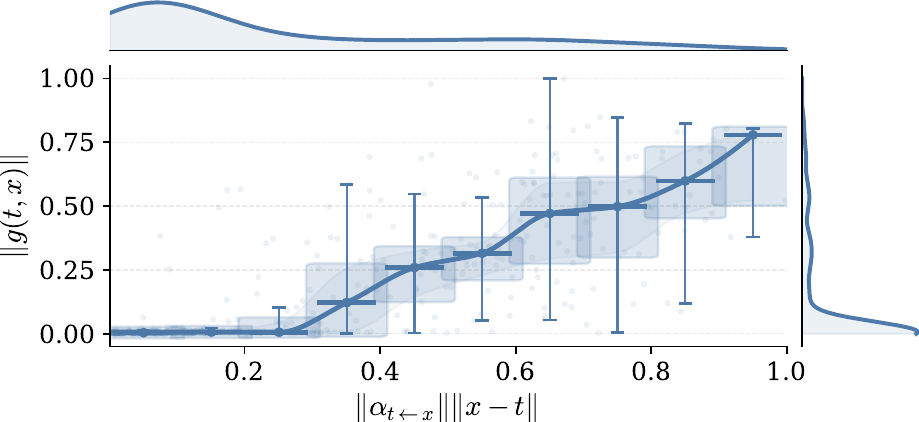}
            \captionof{figure}{
                The variation of $\|g(t,x)\|$ with respect to $\alpha_{x \leftarrow t}\|v_t-v_x\|$ of RAG, where each point represents one data instance.
                The Pearson correlation coefficient for the fitted points is $0.814$.
            }
            \label{fig:2d_line_g_bound_rag}
        \end{figure}
            \begin{figure}[t]
            \centering
            \small
            \includegraphics[width=\linewidth]{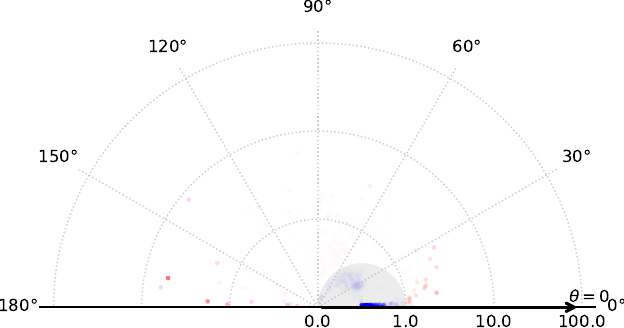}
            \caption{
                The variation of $\|e(t,x)\| - \|e(x)\|$ with respect to $g(t,x)$ of memory evolution, where each point represents one data instance.
            }
            \label{fig:2d_polar_me}
        \end{figure}
        \begin{figure}[t]
            \centering
            \small
            \includegraphics[width=\linewidth]{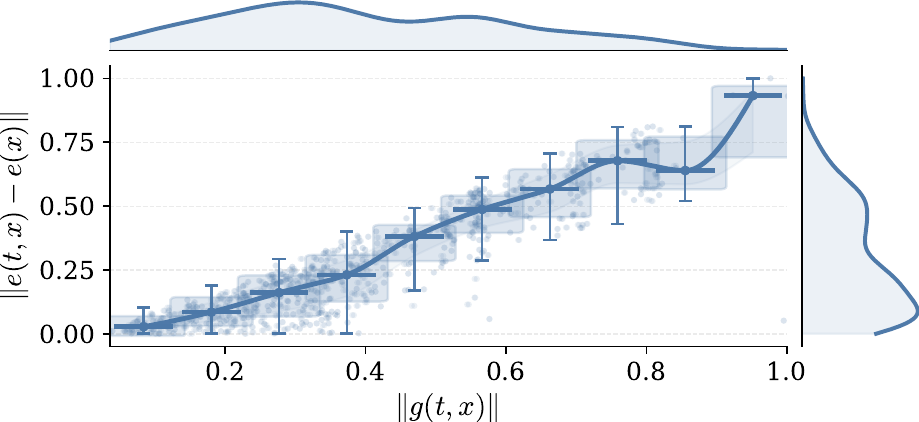}
            \caption{
                The variation of $\|e(t,x)-e(x)\|$ with respect to $\|g(t,x)\|$ of ICL, where each point represents one data instance of memory evolution.
                The Pearson correlation coefficient for the fitted points is $0.903$.
            }
            \label{fig:2d_line_g_norm_me}
        \end{figure}
        \begin{figure}[t]
            \centering
            \small
            \includegraphics[width=\linewidth]{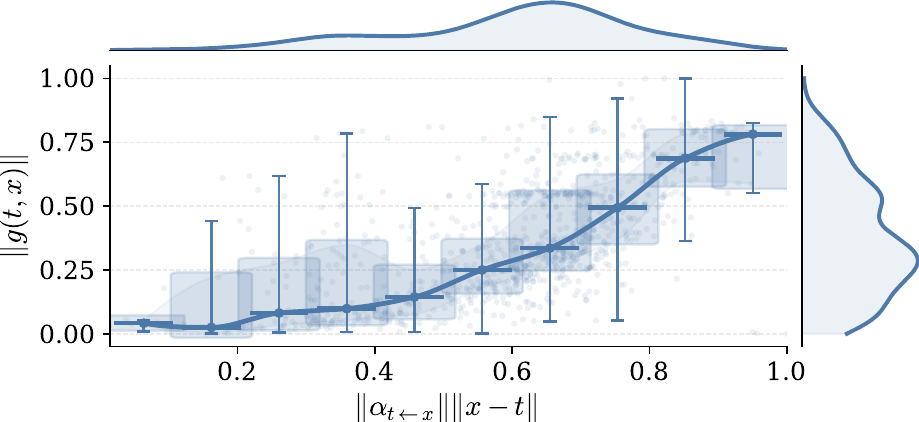}
            \captionof{figure}{
                The variation of $\|g(t,x)\|$ with respect to $\alpha_{x \leftarrow t}\|v_t-v_x\|$ of memory evolution, where each point represents one data instance.
                The Pearson correlation coefficient for the fitted points is $0.617$.
            }
            \label{fig:2d_line_g_bound_me}
        \end{figure}
    
        In this section, we present the analysis of different methods using a single context and a single layer model from Figure~\ref{fig:2d_polar_icl} to Figure~\ref{fig:2d_line_g_bound_me}.

    \subsection{Variation of Contextual Correction Vector with Model Layer}\label{app:vector_cross_layer}
        \begin{figure}[t]
            \centering
            \small
            \includegraphics[width=\linewidth]{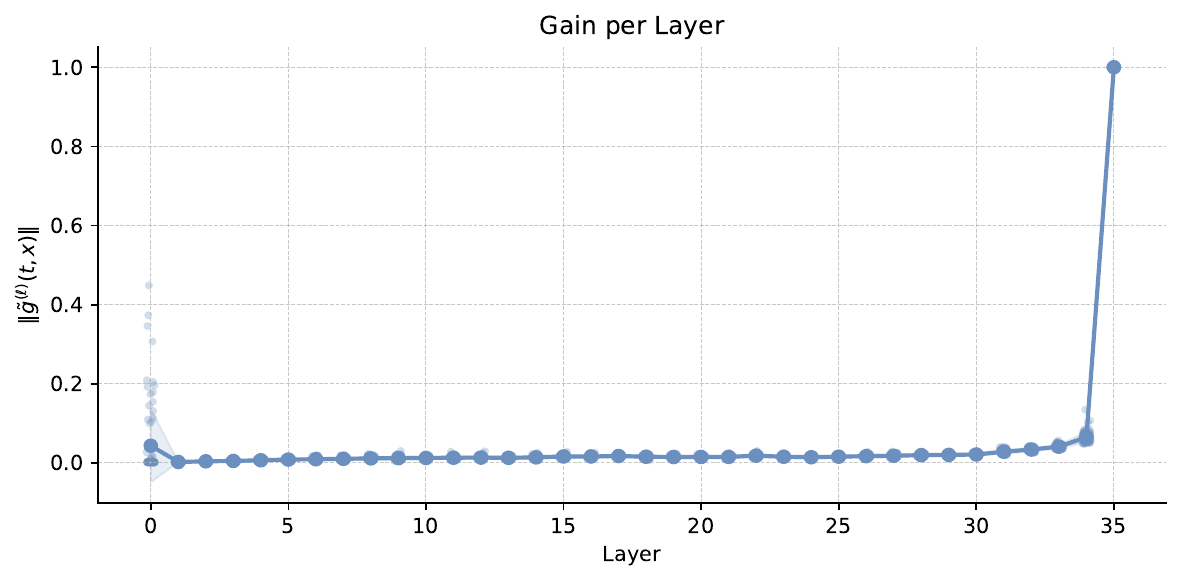}
            \caption{
                Variation of $\|\tilde g^{(\ell)}(t,x)\|$ across all layers with respect to the layer index.
                To capture the trends across different models, we normalize the results separately for each setting.
            }
            \label{fig:vector_multi_layer_full}
        \end{figure}
        
        \begin{figure}[t]
            \centering
            \small
            \includegraphics[width=\linewidth]{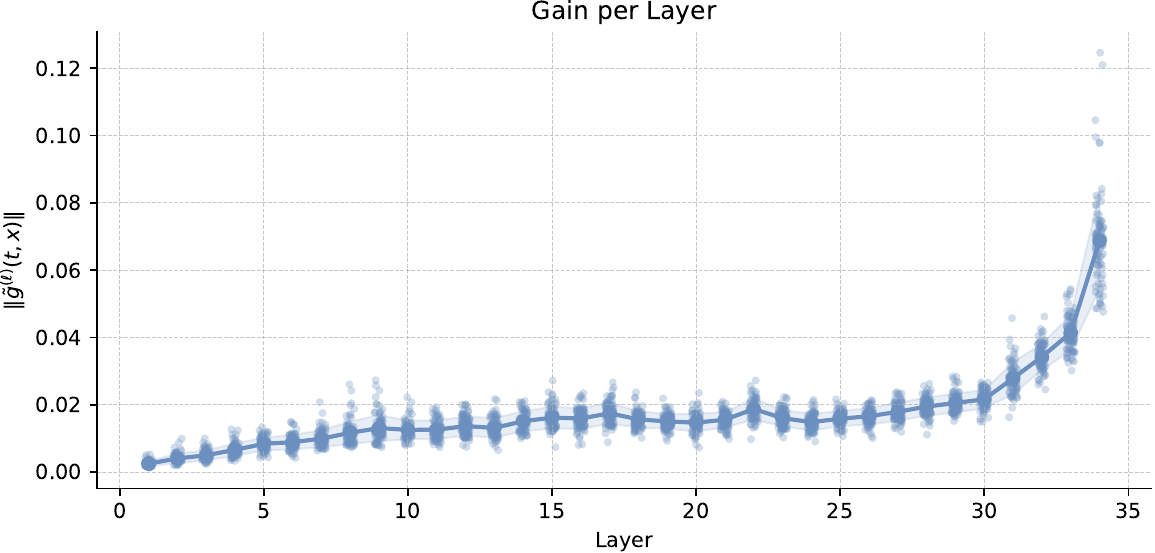}
            \caption{
                Variation of $\|\tilde g^{(\ell)}(t,x)\|$ with respect to the layer index.
                The fitted Pearson correlation coefficient of the scatter points is $0.322$.
                To capture the trends across different models, we normalize the results separately for each setting.
            }
            \label{fig:2d_integer_line_multi_layer}
        \end{figure}
        
        To verify the effect of model depth on the vector as stated in Corollary~\ref{cor:decrease_multi_layer_transformer}, we compute the context information vector at each layer and plot its variation with respect to the layer index.
        Because the models have different numbers of layers, we report results on Qwen3-8B and Ministral-8B.
        Since the first layer mainly reads the input and the last layer mainly produces the final answer, their attention magnitudes are significantly larger than those of intermediate layers, which in turn makes their vectors much larger.
        To better isolate the effect of depth on the vector, we therefore show the vector from the second to the penultimate layer in Figure~\ref{fig:2d_integer_line_multi_layer}.
        For completeness, Figure~\ref{fig:vector_multi_layer_full} also presents the vector across all layers.
        
        From these figures, we observe that:
        \textit{(i)} overall, in multi-layer models, the empirical relationships between the context information vector, error change, and the corresponding quantities are consistent with the predictions of Corollary~\ref{cor:decrease_multi_layer_transformer}, which supports the correctness of our conclusions; and
        \textit{(ii)} as the model depth increases, the context information vector tends to increase overall, suggesting that the Lipschitz constants of the layers are collectively smaller than $1$, i.e., the model is relatively stable with respect to perturbations.

    \subsection{Variation of Contextual Correction Vector with Context Number}\label{app:vector_cross_context_number}
        \begin{figure}[t]
            \centering
            \small
            \includegraphics[width=\linewidth]{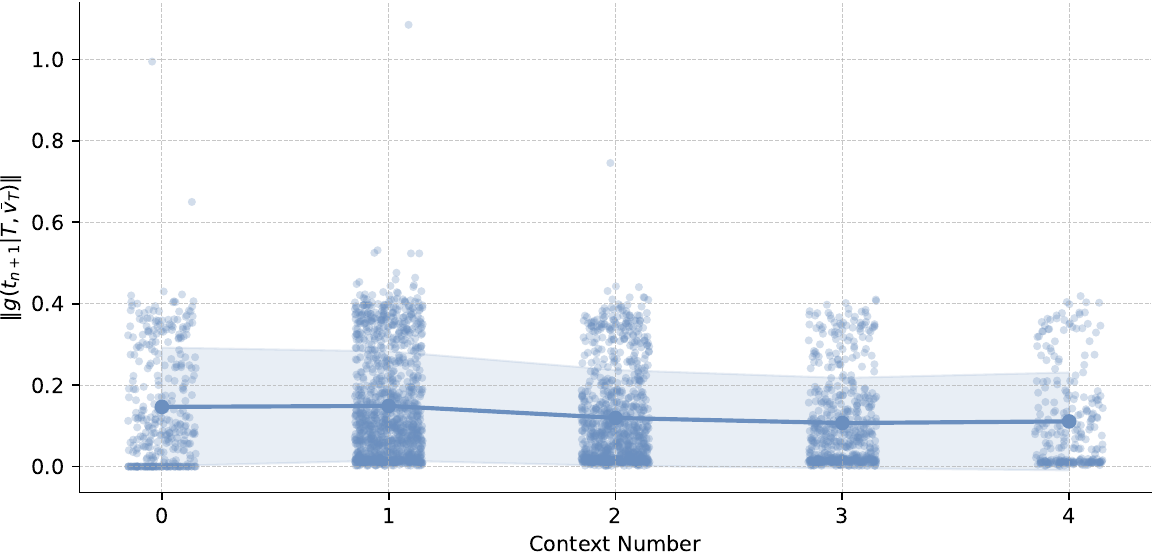}
            \caption{
                The norm of contextual correction vectors with different context numbers.
            }
            \label{fig:vector_cross_context_number}
        \end{figure}

        To analyze how the number of contexts affects the contextual correction vector, we examine how the contextual correction vector changes under different numbers of contexts.
        The results are shown in Figure~\ref{fig:vector_cross_context_number}, from which we can observe that:
        \textit{(i)} Overall, the contextual correction vector does not vary substantially with the number of contexts, which suggests that the model attention to the newly provided context is not significantly constrained by the existing contexts, allowing the new context to still influence the reasoning outcome.
        \textit{(ii)} In contrast, the norm of the contextual correction vector exhibits a decreasing trend, which indicates that the existing contexts can partially help resolve the query and thus reduce the model reliance on the new context.

    \subsection{Performance of Trained Predictor}\label{app:performance_of_trained_predictor}
        \begin{table}[t]
            \centering
            \small
            \caption{
                The relative error ($\frac{|\texttt{gold\_value - pred\_value}|}{\texttt{gold\_value}}$) of the trained predictor under different settings.
            }
            \begin{tabular}{ll|cc}
    \toprule
    \textbf{Dataset} & \textbf{Method} & \textbf{Llama-3.1-8B} & \textbf{Qwen3-8B} \\
    \midrule
    \multirow{2}{*}{MATH} & ICL & $33.8\%$ & $39.4\%$ \\
     & ME & $38.3\%$ & $25.5\%$ \\
    \midrule
    \multirow{2}{*}{CHAMP} & RAG & $27.0\%$ & $23.9\%$ \\
     & ME & $34.4\%$ & $36.7\%$ \\
    \midrule
    \multirow{2}{*}{TheoremQA} & RAG & $26.6\%$ & $26.0\%$ \\
     & ME & $28.8\%$ & $29.5\%$ \\
    \midrule
    MMLU-Redux & ME & $31.4\%$ & $33.4\%$ \\
    \midrule
    \multirow{2}{*}{GPQA} & ICL & $34.4\%$ & $21.1\%$ \\
     & ME & $37.4\%$ & $25.5\%$ \\
    \midrule
    NQ & ME & $39.3\%$ & $33.5\%$ \\
    \midrule
    \multirow{2}{*}{FinQA} & ICL & $25.2\%$ & $38.1\%$ \\
     & ME & $38.1\%$ & $27.1\%$ \\
    \bottomrule
\end{tabular}

            \label{tab:performance_of_trained_predictor}
        \end{table}

        The relative error of the trained predictor is shown in Table~\ref{tab:performance_of_trained_predictor}.

\end{document}